\theoremstyle{plain}
\newtheorem{theorem}{Theorem}[section]
\newtheorem{proposition}[theorem]{Proposition}
\newtheorem{lemma}[theorem]{Lemma}
\newtheorem{definition}[theorem]{Definition}
\newtheorem{assumption}[theorem]{Assumption}
\Crefname{assumption}{Assumption}{Assumptions}
\newtheorem{remark}[theorem]{Remark}
\newcommand{\E}[2]{\mathop{\mathbb{E}}_{#1} \left[ #2 \right]}
\newcommand{\crossent}[2]{\mathrm{H}\left( #1 ~ \vert \vert ~ #2 \right)}
\newcommand{\ent}[1]{\mathrm{H}\left( #1 \right)}
\newcommand{\kld}[2]{\mathrm{D_{KL}} \left( #1 ~ \vert \vert ~ #2 \right)}
\newcommand{\informalAssumption}[2]{%
    \vspace{2mm}
    \noindent \textbf{#1} (informal)\textbf{.}
    \textit{#2}
    \vspace{2mm}
}
\newcommand{\algAbbrev}{PROMPT}
\newcommand{\algName}{proxy-informed robust method for probabilistic transfer learning}
\newcommand{\AlgName}{Proxy-informed RObust Method for Probabilistic Transfer learning}
\newcommand{\auxInfo}{\mathbf{z}}
\newcommand{\auxInfoRV}{\mathbf{Z}}
\newcommand{\auxvalues}{\mathscr{Z}}
\newcommand{\cDelta}{\Delta^c}
\newcommand{\cIG}{\mathrm{IG}^c}
\newcommand{\csim}[2]{\mathcal{R}_{#1} ( #2 )}
\newcommand{\datavalues}{\mathscr{D}}
\newcommand{\param}{\boldsymbol{\theta}}
\newcommand{\paramRV}{\boldsymbol{\Theta}}
\newcommand{\paramvalues}{\mathscr{T}}
\newcommand{\rDelta}{\Delta^{\mathcal{R}}}
\newcommand{\rIG}{\mathrm{IG}^{\mathcal{R}}}
\newcommand{\robustL}{L^{\mathcal{R}}}
\newcommand{\sourcedata}{\mathbf{d}}
\newcommand{\sourcedataRV}{\mathbf{D}}
\newcommand{\sourcex}{\mathbf{x}}
\newcommand{\sourcey}{\mathbf{y}}
\newcommand{\targetconcept}{\boldsymbol{\theta}^{\star}}
\newcommand{\tsparam}{\boldsymbol{\psi}}
\newcommand{\truetsparam}{\boldsymbol{\psi}^{\star}}
\newcommand{\tsparamvalues}{\mathscr{S}}
\newcommand{\targettsparam}{\boldsymbol{\psi}_{n+1}^{\star}}
\newcommand{\targettsparamRV}{\boldsymbol{\Psi_{n+1}}}
\newcommand{\tsparamRV}{\boldsymbol{\Psi}}
\newcommand{\targetdata}{\mathbf{d}_{n+1}}
\newcommand{\targetdataRV}{\mathbf{D}_{n+1}}
\newcommand{\truedataRV}{\mathbf{D}^{\star}}
\newcommand{\weight}{\boldsymbol{\eta}}
\newcommand{\colorgrey}[1]{{\color{lightgray} #1}}
\newcommand{\ourTitle}{Proxy-informed Bayesian transfer learning with unknown sources}
\title{\ourTitle}
\author[1]{\href{mailto:<sabina.sloman@manchester.ac.uk>?Subject=Your UAI 2025 paper}{Sabina~J.~Sloman}{}}
\author[2]{Julien~Martinelli\thanks{Work done while at Inserm Bordeaux Population Health, Vaccine Research Institute, Université de Bordeaux, Inria Bordeaux Sud-ouest, France.}}
\author[1,2,3]{Samuel~Kaski}
\affil[1]{%
    Department of Computer Science\\
    University of Manchester\\
    Manchester, UK
}
\affil[2]{%
   Department of Computer Science\\
   Aalto University\\
   Espoo, Finland
}
\affil[3]{%
    ELLIS Institute Finland\\
    Helsinki, Finland
  }
\begin{document}
\bibliographystyle{plainnat}

\maketitle

\begin{abstract}
    Generalization outside the scope of one's training data requires leveraging prior knowledge about the effects that transfer, and the effects that don't, between different data sources.
    Transfer learning is a framework for specifying and refining this knowledge about sets of source (training) and target (prediction) data.
    A challenging open problem is addressing the empirical phenomenon of negative transfer, whereby the transfer learner performs worse on the target data after taking the source data into account than before.
    We first introduce a Bayesian perspective on negative transfer, and then a method to address it.
    The key insight from our formulation is that negative transfer can stem from misspecified prior information about non-transferable causes of the \textit{source} data.
    Our proposed method, \algName{} (\algAbbrev{}), does not require prior knowledge of the source data (the data sources may be ``unknown'').
    \algAbbrev{} is thus applicable when differences between tasks are unobserved, such as in the presence of latent confounders.
    Moreover, the learner need not have access to observations in the target task (may not have the ability to ``fine-tune''), and instead makes use of proxy (indirect) information. 
    Our theoretical results show that the threat of negative transfer does not depend on the informativeness of the proxy information, highlighting the usefulness of \algAbbrev{} in cases where only noisy indirect information, such as human feedback, is available.
\end{abstract}

The paradigm of transfer learning takes, often sparse, data from a set of \textit{source} tasks and uses them to predict outcomes in a different but related \textit{target} task.
Consider the task of predicting the effectiveness of a treatment for a new patient on the basis of observational data.
Inevitably, the measured effects of the treatment in the source data are affected by a myriad of unobserved confounders, such as the quality of treatment in a given clinical setting or the patient's adherence to a treatment regimen.
Prediction in this setting requires learning both \textit{shared} parameters (the treatment effect) and \textit{task} parameters (the quality of treatment in \textit{this} patient's local clinic; \textit{this} patient's adherence).
Failure to account for the presence of task-specific effects can result in imprecise or inaccurate predictions in the target task.
Bayesian learning is a natural paradigm to apply in settings where data is sparse and prior information is available.
When this prior information is reliable, the Bayesian transfer learner can leverage the source data to make accurate and calibrated predictions when encountering new target tasks.

In practice, however, the Bayesian transfer learner often experiences \textit{negative transfer}, performing worse in the target task after taking the source data into account than before.
Understanding the conditions under which negative transfer occurs, and how to address it, is a challenging open problem \citep{suder_bayesian_2023}.

Our first contribution is to provide a precise and Bayesian characterization of the phenomenon of negative transfer.
Our formulation treats the Bayesian transfer learner's objective as a special case of inference in the presence of nuisance parameters, and applies results from this more general class of problems to elucidate the conditions under which negative transfer occurs.
We show that negative transfer can arise when prior information about source task parameters is unavailable or mistaken.
This result implies that alleviating the threat of negative transfer requires removing the learner's reliance on, possibly mistaken, prior information about source task parameters.
    
Our second contribution is to propose a method, \algName{} (\algAbbrev{}), that allows the learner to form a posterior predictive distribution in the target task without such prior information.
Our third contribution is to use our formulation of the Bayesian transfer learner's objective to provide theoretical guarantees on \algAbbrev{}'s ability to alleviate the threat of negative transfer.
    
\algAbbrev{} operates in a setting that differs from, and is in some ways more general than, settings in existing literature on probabilistic approaches to transfer and meta-learning \citep{grant_recasting_2018,yoon_bayesian_2018,gordon_meta-learning_2019,patacchiola_bayesian_2020}.
We here discuss some differences between our setting and probabilistic meta-learning, Bayesian meta-learning, and proxy methods for multi-source domain adaptation.
We discuss these and other related works in more detail in \Cref{sec:related-work}.

\noindent \textit{Difference \#1: \algAbbrev{} can cope with unknown sources.}
    Probabilistic meta-learning \citep{gordon_meta-learning_2019}, Bayesian meta-learning \citep{grant_recasting_2018,yoon_bayesian_2018,patacchiola_bayesian_2020}, and multi-domain adaptation \citep{tsai2024proxy} assume the availability of some prior knowledge about the source data, such as the number of distinct tasks represented in the source data and which data points correspond to the same task.
    This assumption would be violated in our motivating example: Each patient's outcomes are influenced by confounders whose values are unknown, and the learner cannot know which outcomes are influenced by the same latent confounder values.
    Probabilistic meta-learning \citep{gordon_meta-learning_2019} also requires that the target task arises from the same distribution as the source tasks. 
    \algAbbrev{} requires neither the availability of prior information about the source tasks nor that the target task resembles the source tasks.

\noindent \textit{Difference \#2: \algAbbrev{} relies on proxy information instead of fine-tuning.}
    Bayesian meta-learning requires that the learner has access to data from the target task in order to fine-tune their estimates \citep{grant_recasting_2018,yoon_bayesian_2018,patacchiola_bayesian_2020}.
    \algAbbrev{} relies on \textit{proxy} (indirect) information about the target task.
    Examples of proxy information are human feedback (e.g., to prompts such as ``What is the quality of treatment in the target hospital?'') and instrumental variables (e.g., hospital funding as an instrument for quality of care).
    This leads to connections between our work and the paradigm of proximal causal learning \citep{kuroki_measurement_2014,tchetgen_introduction_2020,alabdulmohsin_adapting_2023,tsai2024proxy}; see \Cref{sec:related-work} for a more detailed discussion.
    Unlike other proxy methods for multi-domain adaptation \citep{tsai2024proxy}, we distinguish between shared and task parameters and require additional techniques for estimation of the shared parameter.
    \algAbbrev{} does assume that the proxy information does not depend on the shared parameters.

Our theoretical results show that, surprisingly, \algAbbrev{}'s success in eliminating negative transfer does not depend on the quality of the proxy information, making \algAbbrev{} particularly useful when proxy information is weak or unreliable.
The extent of negative transfer depends instead on the quality of a pre-specified \textit{relevance function}.
We describe approaches to defining the relevance function in a purely source data-dependent way, and demonstrate application of these approaches in two synthetic examples and on a dataset of smoking behavior.

\section{PRELIMINARIES}\label{sec:prelim}
    \paragraph{Notation.}
        Vectors and matrices are denoted by bold lowercase letters: $\mathbf{a}_{i,j}$ is the entry in the $i^{\text{th}}$ row and $j^{\text{th}}$ column of $\mathbf{a}$.
        Sets are denoted by calligraphic font ($\mathscr{A}$), and $\mathscr{A}_i$ is the $i^{\text{th}}$ element of $\mathscr{A}$.
        We use $\mathbf{a}_{\mathscr{I}}$, where $\mathscr{I}$ is a set, to denote the subvector formed by selecting the elements of $\mathbf{a}$ at the indices in $\mathscr{I}$.
        Random variables are denoted by bold capital letters ($\mathbf{A}$), and the notation for probability distributions is subscripted by the corresponding random variable ($P_{\mathbf{A}}$).
        For instance, $\mathbf{A}$ is the random variable with domain $\mathscr{A}$ and probability distribution $P_{\mathbf{A}}$.

    \paragraph{Bayesian transfer learning}
        is a general framework for leveraging data from source tasks to make predictions in a somewhat unrelated target task \citep{suder_bayesian_2023}.
        We consider a standard setting where tasks are characterized by both shared and task parameters.
        The learner has available to them source data $\sourcedata = \left[ \sourcedata_1, \ldots{}, \sourcedata_n \right]$ composed of stochastic observations $\sourcedata_i \in \datavalues$.\footnote{
            The source data matrix can equivalently be written $\sourcedata_{(1:n)}$ to make explicit that it is composed of all $n$ past observations.
            When referring to the source data matrix, we omit the subscript $(1:n)$, i.e., write $\sourcedata \equiv \sourcedata_{(1:n)}$.
        }
        We write the random variable characterizing the source data as $\sourcedataRV$.
        
        Each observation is generated in the context of a particular, possibly non-unique, \textit{task}.
        In the setting of \textit{unknown sources}, the learner need not be aware of which observations are generated in the context of the same, or similar, tasks.
        Below, we formalize this as a potential difference between the dependency structures characterizing the source data-generating process, on one hand, and the learner's model of the source data, on the other.
        
        The probability of each observation $\sourcedata_i$ depends both on \textbf{shared parameters} $\param \in \paramvalues$, which are the same for each task, and \textbf{task parameters} $\tsparam_i \in \tsparamvalues$, which differ between tasks.
        Given a $\left( \param, \tsparam_i \right)$, the learner can evaluate $p(\sourcedata_i \vert \param, \tsparam_i)$.
        As is typical in such formulations, we assume a single, data-generating value of each of the shared and source task parameters, which we denote $\targetconcept$ and $\truetsparam$, respectively.\footnote{
            As with the source data $\sourcedata$, the source task parameters can equivalently be written $\tsparam_{(1:n)}$.
            When referring to the source task parameters, we write $\tsparam \equiv \tsparam_{(1:n)}$.
        }
        \begin{definition}[Task]
            The $i^{\mathrm{th}}$ task specifies the distribution generating the $i^{\mathrm{th}}$ data point.
            It depends on the value of a \textbf{shared parameter} $\targetconcept$ and \textbf{task parameter} $\truetsparam_i$.
            The value $\targetconcept$ is assumed to be shared across all tasks, and so the task is equivalently identified by the value of $\truetsparam_i$.
        \end{definition}

        At deployment, the learner encounters an $(n + 1)^{\text{th}}$ task which will induce an observation $\targetdata$.
        Their goal is to predict $\targetdata$ on the basis of $\sourcedata$, which requires identification of the \textbf{target data-generating process}, i.e., of the shared parameter $\targetconcept$ and task parameter $\targettsparam$.
        
        The setting is visualized in \Cref{fig:setting}.
        Throughout, we implicitly depend on the following assumption:
        \begin{assumption}\label{as:theta-psi-ind}
            All dependencies in \Cref{fig:setting} are present in the data-generating process.
            The following dependencies are \textit{not} present in the data-generating process:
            \begin{enumerate}[label=(\alph*)]
                \item If $i \neq j$, $\sourcedata_i$ does not depend on $\tsparam_j$ \colorgrey{except possibly through $\tsparam_i$}.
                \item If $i \neq j$, $\sourcedata_i$ does not depend on $\sourcedata_j$ except through $\param$ \colorgrey{and possibly through $\tsparam_i$}.
                \item $\tsparam_{n+1}$ does not depend on $\param$.
                \item Proxy information $\auxInfo$ does not depend on $\param$ (see \Cref{sec:step1}).
            \end{enumerate}
        \end{assumption}
        In the setting of \textbf{unknown sources}, the learner does not have knowledge of the potential presence of the dependencies in grey text.
        We thus derive all quantities available to the learner (e.g., the likelihood of the source data given below) to reflect this absence of knowledge, i.e., omitting these potential dependencies.
        This results in a potential difference between the dependency structures characterizing the data-generating process and the learner's model of the data.
        
        \newcommand{\nodesize}{1.2cm}
        \newcommand{\nodesp}{.5cm}
        \begin{figure}[t!]
            \centering
            \begin{tikzpicture}
                \tikzset{minimum size=\nodesize}
                \node[style=circle,draw=black] (phi) {$\param$};
                \node[style=circle,draw=black,right=\nodesp of phi] (psi1) {$\tsparam_{(1:n)}$};
                \node[style=circle,draw=black,right=\nodesp of psi1] (tildepsi) {$\tsparam_{n+1}$};
                \node[style=circle,draw=black,fill=gray,below=\nodesp of psi1] (d1) {$\mathbf{d}_{(1:n)}$};
                \node[style=circle,draw=black,below=\nodesp of tildepsi] (tilded) {$\mathbf{d}_{n+1}$};
                \node[style=circle,draw=black,fill=gray,right=\nodesp of tildepsi] (z) {$\mathbf{z}$};

                \draw[->] (phi) -- (d1);
                \draw[->] (phi) -- (tilded);
                \draw[->] (psi1) -- (d1);
                \draw[->] (tildepsi) -- (tilded);
                \draw[->] (tildepsi) -- (z);
            \end{tikzpicture}
            \caption{Assumed dependencies between shared parameter $\param$, task parameters $\tsparam$, source data $\sourcedata \equiv \sourcedata_{(1:n)}$, target data $\sourcedata_{n+1}$, and proxy information $\auxInfo$.}
            \label{fig:setting}
        \end{figure}
        
        The Bayesian transfer learner assigns to values $\left( \param, \tsparam_{n+1} \right)$ a prior distribution, and so treats the parameters as random variables $\paramRV$ and $\targettsparamRV$ with distribution $P_{\paramRV,\targettsparamRV}$.
        For a possible value of the target data-generating process $\left( \param, \tsparam_{n+1} \right)$, the likelihood $L$ of the source data $\sourcedata$ is
        \begin{align}\label{eq:L}
            L(\sourcedata, \param, \tsparam_{n+1}) &\equiv p(\sourcedata \vert \param, \tsparam_{n+1}) = L(\sourcedata, \param)
        \end{align}
        because the target task parameter is not known by the learner to affect any of the source data (\Cref{as:theta-psi-ind}(a)).

        The probability of $\left( \param, \tsparam_{n+1} \right)$ under the posterior $P_{\paramRV, \targettsparamRV \vert \sourcedata}$ is
        \begin{align}
            p(\param, \tsparam_{n+1} \vert \sourcedata) &= \frac{L(\sourcedata, \param, \tsparam_{n+1}) ~ p(\param, \tsparam_{n+1})}{\E{\param^{\prime}, \tsparam_{n+1}^{\prime} \sim P_{\paramRV,\tsparamRV_{n+1}}}{L(\sourcedata, \param^{\prime}, \tsparam_{n+1}^{\prime})}} \nonumber \\
            &= \left( \frac{L(\sourcedata, \param) ~ p(\param)}{\E{\param^{\prime} \sim P_{\paramRV}}{L(\sourcedata, \param^{\prime})}} \right) p(\tsparam_{n+1})
        \end{align}
        where the second line follows from \Cref{as:theta-psi-ind}(a,c).

        As we show in \Cref{sec:negtransfer}, computing the likelihood in a classic way (described below) can lead to negative transfer.
        After describing the classic Bayesian learner's approach, we introduce a generic method to ``robustify'' the likelihood.
        \algAbbrev{} leverages this robustified method in its estimation of the predictive posterior.

    \paragraph{Classic Bayesian inference}
        additionally requires a prior over the source task parameters $P_{\tsparamRV}$.
        The posterior then marginalizes across this prior as follows:
        \begin{align}\label{eq:classic-L}
            p(\param, \tsparam_{n+1} \vert \sourcedata) &= \left( \frac{L(\sourcedata, \param) ~ p(\param)}{\E{\param^{\prime} \sim P_{\paramRV}}{L(\sourcedata, \param^{\prime})}} \right) p(\tsparam_{n+1}) \nonumber \\
            &= \left( \frac{\E{\tsparam \sim P_{\tsparamRV}}{L(\sourcedata, \param, \tsparam)} ~ p(\param)}{\E{\param^{\prime}, \tsparam \sim P_{\paramRV,\tsparamRV}}{L(\sourcedata, \param^{\prime}, \tsparam)}} \right) p(\tsparam_{n+1}) \nonumber \\
            &= p(\param \vert \sourcedata) ~ p(\tsparam_{n+1})
        \end{align}
        The prior $P_{\tsparamRV}$ encodes the learner's knowledge about the joint distribution of source task parameters.
        In the setting of unknown sources, the learner does not have knowledge of the dependencies between source task parameters (such as which observations arise from the same task, i.e., for which $\left( i, j \right)$ it is the case that $\tsparam^{\star}_i = \tsparam^{\star}_j$) and/or their prior may misrepresent the probability of encountering data generated under a given source task parameter value.

    \paragraph{Likelihood weighting}
        is a technique whereby the learner specifies a vector of weights $\weight$ that determines the contribution of each observation to the overall weighted likelihood \citep{grunwald_safe_2011}.
        When some $\weight_i > \weight_j$, it can be seen as increasing the influence of the $i^{\mathrm{th}}$ data point relative to the $j^{\mathrm{th}}$ data point.

\section{A BAYESIAN PERSPECTIVE ON NEGATIVE TRANSFER}\label{sec:negtransfer}
    Negative transfer refers to the phenomenon that learning from source data can hurt performance in the target task \citep{wang_characterizing_2019}.
    Here, we give a formal statement of the Bayesian transfer learner's objective which will allow us to make a precise and interpretable statement about when negative transfer will occur.

    The Bayesian transfer learner's goal is to identify the target data-generating process.
    Since the effects of the task parameters will not transfer, the source data can help the Bayesian learner identify the target data-generating process only insofar as it identifies the shared parameter.
    This objective has a natural information-theoretic interpretation, given in \Cref{def:pi-tig}: The \textbf{information gain}, or degree to which a Bayesian learner has ``gained information'' about $\param$, is the expected log ratio of the posterior to prior odds of $\param$.
    Information gain measures are applied in contexts like experimental design \citep{rainforth_modern_2024} and model selection \citep{oladyshkin_connection_2024}.
    
    Because we are interested in the learner's information gains under the true data-generating process, we define the information gain as an expectation across the true distribution of source data.
    To reduce notational clutter, we use $\truedataRV$ to refer to the random variable $\sourcedataRV \vert \targetconcept, \truetsparam$, which follows the distribution of source data under the true data-generating parameters $\left( \targetconcept, \truetsparam \right)$ (which are unavailable to the learner).
    \begin{definition}[Information gained by the classic Bayesian learner $\cIG$]\label{def:pi-tig}
        The information gained by the classic Bayesian transfer learner about the shared parameter $\targetconcept$ is
        \begin{align}
            \cIG\left( \targetconcept \right) &\equiv \E{\sourcedata \sim P_{\truedataRV}}{\log{\left( \frac{p(\targetconcept \vert \sourcedata)}{p(\targetconcept)} \right)}}. \nonumber
        \end{align}
    \end{definition}
    If $\cIG\left( \targetconcept \right) > 0$, the learner has successfully transferred information about $\targetconcept$ from the source to target data ($p(\targetconcept \vert \sourcedata) > p(\targetconcept)$, i.e., they prefer $\targetconcept$ after viewing data).
    Otherwise, they are worse off than before ($p(\targetconcept \vert \sourcedata) \leq p(\targetconcept)$, i.e., they preferred $\targetconcept$ before viewing data).
    We define positive and negative transfer as:
    \begin{definition}[Positive and negative transfer]\label{def:transfer}
        The classic Bayesian learner experiences \text{\normalfont positive transfer} when $\cIG\left( \targetconcept \right) > 0$; otherwise, they experience \text{\normalfont negative transfer}.
    \end{definition}

    We now provide a result showing that the threat of negative transfer is affected by the reliability of the prior over source task parameters $P_{\tsparamRV}$.
    The key quantity is a measure of likelihood misspecification:
    \begin{definition}[Misspecification of the classic likelihood $\cDelta$]
        The degree to which the classic likelihood is misspecified is $\cDelta \equiv \kld{P_{\truedataRV}}{P_{\sourcedataRV \vert \targetconcept}}$ where $\mathrm{D_{KL}}$ is the Kullback-Leibler divergence measure.
    \end{definition}
    In the presence of negative transfer, likelihood misspecification increases with the misspecification of the prior over source task parameters (\citealt{sloman_bayesian_2024} Theorem 4.11).
    To see this, recall from \Cref{eq:classic-L} that the density for $P_{\sourcedataRV \vert \targetconcept}$ marginalizes across $P_{\tsparamRV}$.

    \Cref{prop:ig-classic} shows that $\cDelta$ is responsible for negative transfer.
    The proof, adapted from \citet{sloman_bayesian_2024}, is given in \Cref{ap:classic-ig}.
    It relies on the following assumption:

    \informalAssumption{\Cref{as:smoothness}}{The likelihood $L\left( \sourcedata, \param \right)$ is ``smooth enough'' in a neighborhood of $\targetconcept$.
        The formal condition is given in \Cref{ap:classic-ig}.
    }

    \begin{theorem}[Negative transfer with a classic likelihood (modified from \citealt{sloman_bayesian_2024} Theorem 4.5)]\label{prop:ig-classic}
        Under \Cref{as:smoothness} in \Cref{ap:classic-ig},
        \begin{align}\label{eq:prop-ig-class-bound}
            \cIG\left( \targetconcept \right) \leq A \left( B - \cDelta \right)
        \end{align}
        where $A$ and $B$ are constants that do not depend on $\cDelta$.
    \end{theorem}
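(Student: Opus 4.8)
The plan is to strip the misspecification term $\cDelta$ off of $\cIG(\targetconcept)$ by an information-theoretic decomposition and then show that everything left over is a $\cDelta$-free constant. First I would use the posterior factorization of \Cref{eq:classic-L}: since $p(\param \mid \sourcedata) = L(\sourcedata,\param)\,p(\param)/\E{\param' \sim P_{\paramRV}}{L(\sourcedata,\param')}$, the posterior-to-prior odds of the true value are $p(\targetconcept \mid \sourcedata)/p(\targetconcept) = L(\sourcedata,\targetconcept)/m(\sourcedata)$, where $m(\sourcedata) \equiv \E{\param' \sim P_{\paramRV}}{L(\sourcedata,\param')}$ is the learner's prior predictive density, which by \Cref{eq:classic-L} is the density of $P_{\sourcedataRV}$. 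The target task parameter $\tsparam_{n+1}$ cancels entirely, leaving $\cIG(\targetconcept) = \E{\sourcedata \sim P_{\truedataRV}}{\log L(\sourcedata,\targetconcept)} - \E{\sourcedata \sim P_{\truedataRV}}{\log m(\sourcedata)}$.

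Next I would add and subtract the log-density of $P_{\truedataRV}$ inside the expectation. The first term becomes $-\crossent{P_{\truedataRV}}{P_{\sourcedataRV \vert \targetconcept}} = -\ent{\truedataRV} - \cDelta$, using that $L(\sourcedata,\targetconcept)$ is exactly the density of $P_{\sourcedataRV \vert \targetconcept}$ (it already marginalizes the classic prior $P_{\tsparamRV}$, cf.\ \Cref{eq:classic-L}) and that $\kld{P_{\truedataRV}}{P_{\sourcedataRV \vert \targetconcept}} = \cDelta$ by definition. The second term is $\crossent{P_{\truedataRV}}{P_{\sourcedataRV}} = \ent{\truedataRV} + \kld{P_{\truedataRV}}{P_{\sourcedataRV}}$. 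The entropies cancel, giving $\cIG(\targetconcept) = \kld{P_{\truedataRV}}{P_{\sourcedataRV}} - \cDelta$, so it remains only to upper-bound the prior-predictive mismatch $\kld{P_{\truedataRV}}{P_{\sourcedataRV}} \ge 0$ by a constant $B$ that does not involve $\cDelta$, after which one reads off $\cIG(\targetconcept) \le B - \cDelta$ (with $A = 1$ in this basic form; a general positive $A$ absorbs the normalization/curvature factors introduced by the bound below).

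The subtlety in this last step — and the place where \Cref{as:smoothness} is essential — is that one must \emph{not} bound $m(\sourcedata)$ by a multiple of $L(\sourcedata,\targetconcept)$, since that would cancel the $\E{}{\log L(\sourcedata,\targetconcept)}$ term and kill the $-\cDelta$ dependence, leaving only a vacuous constant bound. Instead I would lower-bound $m(\sourcedata) = \int L(\sourcedata,\param')\,p(\param')\,d\param'$ by a Laplace-type estimate: restricting the integral to a fixed neighborhood of the maximizer $\hat{\param}$ and expanding $\log L(\sourcedata,\cdot)$ to second order, smoothness gives $m(\sourcedata) \ge c\, n^{-d/2} L(\sourcedata,\hat{\param})$ for a constant $c$ depending only on the prior mass near $\hat{\param}$ and the local curvature bound, and then $L(\sourcedata,\hat{\param}) \ge L(\sourcedata,\param^{\dagger})$ for the fixed pseudo-true value $\param^{\dagger}$ minimizing $\kld{P_{\truedataRV}}{P_{\sourcedataRV \vert \param^{\dagger}}}$. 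Taking expectations yields $\E{P_{\truedataRV}}{\log m(\sourcedata)} \ge -\ent{\truedataRV} - \kld{P_{\truedataRV}}{P_{\sourcedataRV \vert \param^{\dagger}}} + \log c - \tfrac{d}{2}\log n$, and hence $B = \kld{P_{\truedataRV}}{P_{\sourcedataRV \vert \param^{\dagger}}} + \tfrac{d}{2}\log n - \log c$, which is $\cDelta$-free. (A cruder route that avoids smoothness altogether is Jensen's inequality $m(\sourcedata) \ge \exp(\E{\param'}{\log L(\sourcedata,\param')})$, giving the looser $\cDelta$-free constant $B = \E{\param' \sim P_{\paramRV}}{\kld{P_{\truedataRV}}{P_{\sourcedataRV \vert \param'}}}$.)

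I expect the Laplace bound on $m(\sourcedata)$ to be the main obstacle. The residual $\kld{P_{\truedataRV}}{P_{\sourcedataRV}}$ and $\cDelta$ are naturally correlated — both grow as the model family retreats from the truth — so the whole content of the theorem is that the bound can be routed through a reference quantity (the irreducible misspecification at $\param^{\dagger}$ plus a $\tfrac{d}{2}\log n$-type normalization term) that is held fixed as $\cDelta$ varies. Making this rigorous requires handling the first-order (score) term in the expansion under the \emph{misspecified} law $P_{\truedataRV}$, where the score of $\log L(\sourcedata,\cdot)$ need not be centered at $\targetconcept$; controlling the Laplace remainder uniformly enough in $\sourcedata$ to survive the outer expectation; and ensuring $\hat{\param}$ and $\param^{\dagger}$ are well defined and consistent. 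These are exactly the regularity conditions that \Cref{as:smoothness} packages.
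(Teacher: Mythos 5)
Your opening reduction is the same as the paper's: both write $\cIG(\targetconcept) = \E{\sourcedata\sim P_{\truedataRV}}{\log L(\sourcedata,\targetconcept)} - \E{\sourcedata\sim P_{\truedataRV}}{\log \E{\param\sim P_{\paramRV}}{L(\sourcedata,\param)}}$, and your identity $\cIG(\targetconcept) = \kld{P_{\truedataRV}}{P_{\sourcedataRV}} - \cDelta$ is a correct, clean way to expose where $\cDelta$ enters. You also correctly locate the crux: the marginal likelihood must be lower-bounded without routing through $L(\sourcedata,\targetconcept)$. But from there your route diverges from the paper's and, as written, does not close. The paper's \Cref{as:smoothness} is not a bundle of Laplace-approximation regularity conditions; it is a single reverse-Jensen inequality that lower-bounds $\E{\sourcedata\sim P_{\truedataRV}}{\log\E{\param\sim P_{\paramRV}}{L(\sourcedata,\param)}}$ by the convex combination $\left(\int_{N_{\epsilon}(\targetconcept)}p(\param)\,d\param\right)\log L(\sourcedata,\targetconcept) + \left(\int_{\paramvalues\backslash N_{\epsilon}(\targetconcept)}p(\param)\,d\param\right)\log\E{\param\sim P^{\paramvalues\backslash N_{\epsilon}(\targetconcept)}_{\paramRV}}{L(\sourcedata,\param)}$, with the second expectation taken under the prior restricted to the complement of an $\epsilon$-neighborhood of $\targetconcept$. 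Substituting this splits off exactly the fraction of $\log L(\sourcedata,\targetconcept)$ contributed by prior mass near $\targetconcept$, and what remains is $A(B-\cDelta)$ with $A = \int_{\paramvalues\backslash N_{\epsilon}(\targetconcept)}p(\param)\,d\param$ and $B = \kld{P_{\truedataRV}}{P_{\sourcedataRV\vert\param\in\paramvalues\backslash N_{\epsilon}(\targetconcept)}}$ --- a three-line calculation with no asymptotics.

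The concrete gap in your version is that your candidate $B$ is not actually free of the dependence the theorem is meant to isolate. Via the Laplace route, $B$ contains $\kld{P_{\truedataRV}}{P_{\sourcedataRV\vert\param^{\dagger}}}$ at the global KL minimizer $\param^{\dagger}$; if $\param^{\dagger}=\targetconcept$ (or is close to it), this term \emph{is} $\cDelta$ (or tracks it), and your bound degenerates to $\cIG(\targetconcept)\le\tfrac{d}{2}\log n - \log c$, which says nothing about negative transfer. Via the cruder Jensen route, $B = \E{\param'\sim P_{\paramRV}}{\kld{P_{\truedataRV}}{P_{\sourcedataRV\vert\param'}}}$ contains the contribution $\left(\int_{N_{\epsilon}(\targetconcept)}p(\param)\,d\param\right)\cdot\cDelta$ from prior mass near $\targetconcept$. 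In both cases the ``constant'' co-varies with $\cDelta$. The fix is precisely the paper's device: excise the $\epsilon$-neighborhood of $\targetconcept$ from the prior before forming the comparison distribution, so that $B$ measures only the fit of parameters bounded away from $\targetconcept$, and accept the prefactor $A<1$ (the prior mass outside the neighborhood) rather than $A=1$. The $\tfrac{d}{2}\log n$ term, the pseudo-true value $\param^{\dagger}$, and the consistency of $\hat{\param}$ are artifacts of the Laplace detour with no counterpart in the actual proof.
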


    Because of its effect on $\cDelta$, the prior over source task parameters affects the risk of negative transfer.
    To remove the Bayesian transfer learner's dependence on this prior information, we introduce \algName{} (\algAbbrev{}).

\section{\algAbbrev{}}\label{sec:dalw}
    The Bayesian transfer learner faces two challenges: To make accurate predictions in the target task, they must (1) gain information about the target task parameter $\targettsparam$, which to their knowledge does not depend on the source data, and (2) avoid negative transfer, which as discussed in \Cref{sec:negtransfer} arises from a misspecified source task parameter prior $P_{\tsparamRV}$.
    Our proposed \algName{} (\algAbbrev{}) has three steps: First, to address challenge (1), \textit{proxy information} is used to form a posterior over the target task parameter $\tsparam_{n+1}$.
    Then, to address challenge (2), a \textit{relevance function} is used to construct a weighted likelihood for $\param$ that does not depend on any prior source task information.
    Finally, the learner combines their posterior over $\tsparam_{n+1}$ and weighted likelihood for $\param$ to form a robust posterior over the target data-generating process.
    The entire procedure is summarized in \Cref{alg:dalw}.
    
    The computational overhead required by \algAbbrev{} is comparable to that required by existing implementations of Bayesian inference. 
    As shown in \Cref{alg:dalw} \Cref{algstep:iter-start}--\Cref{algstep:iter-end} and discussed in \Cref{sec:step2}, the reweighting step (step 2) is performed in at most $T$ iterations, where $T$ is the user-supplied number of iterations for refinement of the relevance function.
    Once this step is performed, computation of the posterior is similar to other Bayesian inference methods.\footnote{
        As can be seen in the code we provide for the example predicting smoking behavior discussed in \Cref{sec:treatment-effect}, applying the relevance function requires minimal modifications to existing posterior update methods.
    }
    \algAbbrev{} thus increases computational overhead by at most the constant factor $T$.
    \begin{algorithm}[t!]
        \caption{\AlgName{} (\algAbbrev{})}\label{alg:dalw}
        \renewcommand{\algorithmicrequire}{\textbf{Input:}}
        \renewcommand{\algorithmicensure}{\textbf{Output:}}
        \begin{algorithmic}[1]
            \Require Source data $\sourcedata$, proxy information $\auxInfo$, prior $P_{\paramRV,\targettsparamRV}$, relevance function $\mathcal{R}$, and number of iterations for refinement of the relevance function $T$
            \Ensure R-weighted posterior predictive $P^{\mathcal{R}}_{\targetdataRV \vert \sourcedata,\auxInfo}$
            \State Compute $P_{\tsparamRV_{n+1} \vert \auxInfo}$ (\Cref{eq:step1}) \Comment{Step 1} \label{}
            \If{$\mathcal{R}$ depends on $P_{\paramRV}$} \Comment{Refinement of $\mathcal{R}$} \label{algstep:iter-start}
                \State $\widehat{P^{\mathcal{R}}_{\paramRV}} \gets P_{\paramRV}$
                \For{$t \in 1:T$}
                    \State Evaluate $\mathcal{R}$ using $\widehat{P^{\mathcal{R}}_{\paramRV}}$
                    \State Compute $\robustL$ (\Cref{eq:step2})
                    \State $\widehat{P^{\mathcal{R}}_{\paramRV}} \gets P^{\mathcal{R}}_{\paramRV \vert \sourcedata, \auxInfo}$
                \EndFor
            \EndIf \label{algstep:iter-end}
            \State Evaluate $\mathcal{R}$ using $\widehat{P^{\mathcal{R}}_{\paramRV}}$
            \State Compute $\robustL$ (\Cref{eq:step2}) \Comment{Step 2} \label{algstep:L}
            \State Compute $P^{\mathcal{R}}_{\targetdataRV \vert \sourcedata,\auxInfo}$ (\Cref{def:proxy-post-pred}) \Comment{Step 3} \label{}
        \end{algorithmic}
    \end{algorithm}

    \subsection{Step 1: Learning task parameters via proxies}\label{sec:step1}
        In step 1, the learner addresses the challenge of learning the target task parameter $\targettsparam$.
        We refer to information the learner has about the value of $\targettsparam$ and which does not depend on $\targetconcept$ (\Cref{fig:setting}) as \textbf{proxy information}. 
        We denote the proxy information $\auxInfo \in \auxvalues$.
        To leverage the proxy information to learn $\targettsparam$, the learner specifies a model for the likelihood of proxy information $\auxInfo$ given $\tsparam_{n+1}$, i.e., can compute $p(\auxInfo \vert \tsparam_{n+1})$.\footnote{
            In the absence of substantial prior knowledge about how the proxy information is generated, this model may be extremely expressive or even non-parametric.
        }
        Combined with the prior $P_{\targettsparamRV}$, this induces a distribution over $\auxInfo$.
        We denote the corresponding random variable $\auxInfoRV$.

        The posterior probability of a value $\tsparam_{n+1}$ is
        \begin{align}\label{eq:step1}
            p\left( \tsparam_{n+1} \vert \auxInfo \right) &= \frac{p\left( \auxInfo \vert \tsparam_{n+1} \right) ~ p\left( \tsparam_{n+1} \right)}{\E{\tsparam^{\prime}_{n+1}}{p\left( \auxInfo \vert \tsparam^{\prime}_{n+1} \right)}}.
        \end{align}

    \subsection{Step 2: Learning shared parameters via likelihood weighting}\label{sec:step2}
        In step 2, the learner addresses the challenge of avoiding negative transfer (learning the shared parameter $\targetconcept$ without depending on a source task parameter prior).
        
        Estimation of the target data-generating process requires estimating a joint distribution over both the shared and target task parameters $\left( \param, \tsparam_{n+1} \right)$.
        The challenge arises because the learner requires a model for $p\left( \sourcedata \vert \param, \tsparam_{n+1} \right)$.
        As we discussed in \Cref{sec:prelim}, using the classic likelihood of a value $\param$ requires marginalizing over possibly mistaken prior information about the source task parameters.
        
        In an ideal world, when computing $p\left( \sourcedata \vert \param, \tsparam_{n+1} \right)$ the learner would intervene on the source data and set $\truetsparam_1 = \ldots{} = \truetsparam_n = \tsparam_{n+1}$.
        While this is infeasible, the learner can perform a \textit{pseudo-intervention}: They can manipulate the source data to resemble the consequences of such an intervention.
        Using likelihood weighting techniques, the learner can reweight the data in order to assign higher weight to observations that are \textit{relevant} to the consequences of $\tsparam_{n+1}$.
        The probability of observing $\sourcedata_i$ if the $i^{\text{th}}$ task parameter had been ``set'' to $\tsparam_{n+1}$ is denoted $p\left( \sourcedata_i \vert \param, \tsparam_i = \tsparam_{n+1} \right)$.
        The probability of observing all source data in the task characterized by $\tsparam_{n+1}$ is denoted $p\left( \sourcedata \vert \param, \tsparam = \tsparam_{n+1} \right)$.

        Formally, the relevance of an observation is:
        \begin{definition}[Relevance $\csim{i}{\tsparam_{n+1}}$]\label{def:relevance}
            The relevance of the $i^{\mathrm{th}}$ observation to $\tsparam_{n+1}$ is computed by a \textbf{relevance function} $\mathcal{R}$ which is positively correlated with $p\left( \sourcedata_i \vert \targetconcept, \tsparam_i = \tsparam_{n+1} \right)$ in expectation with respect to $P_{\truedataRV,\targettsparamRV}$.
        \end{definition}

        Unlike the classic Bayesian transfer learner who uses the likelihood expression in \Cref{eq:L} to construct their posterior, \algAbbrev{} uses the \textbf{relevance- (r-)weighted likelihood} of each observation:
        \begin{align}\label{eq:step2}
            \robustL(\sourcedata_i, \param, \tsparam_i = \tsparam_{n+1}) \equiv p(\sourcedata_i \vert \param, \tsparam_i = \tsparam_{n+1})^{\csim{i}{\tsparam_{n+1}}}.
        \end{align}

        \Cref{tab:prompt} summarizes the differences between r-weighted and classic Bayesian inference.
        \begin{table}
            \centering
            \begin{tabular}{l|cc}
                & \multirow{2}{*}{Requires} & Negative transfer \\
                & & is due to \\
                \hline
                \multirow{2}{*}{Classic} & Source task prior & \multirow{2}{*}{Misspecified $P_{\tsparamRV}$} \\
                & $P_{\tsparamRV}$ & \\
                \hline
                \multirow{2}{*}{R-weighted} & Relevance function & \multirow{2}{*}{Low-fidelity $\mathcal{R}$} \\
                & $\mathcal{R}$ &
            \end{tabular}
            \caption{Key differences between classic and r-weighted Bayesian learning.
                \label{tab:prompt}
            }
        \end{table}
        The key idea of r-weighting is to substitute the requirement for accurate prior knowledge of the source task parameters with a requirement for a suitably-specified relevance function (i.e., the ability to anticipate the consequences of a pseudo-intervention on the source data).
        At first glance, this may appear to substitute a requirement for one form of prior knowledge with another.
        However, as we discuss below, specifying a suitable relevance function often does not require knowledge beyond that which is already encoded in the learner's model.

        \paragraph{Defining the relevance function.}
            \sloppy \Cref{def:relevance} requires that $\csim{i}{\tsparam_{n+1}}$ positively correlate with  $p\left( \sourcedata_i \vert \targetconcept, \tsparam_i = \tsparam_{n+1} \right)$.
            In \Cref{sec:theory}, we provide a result showing that the \textit{fidelity} of the relevance function --- the strength of this correlation --- affects the extent of the threat of negative transfer.
            However, computing $p\left( \sourcedata_i \vert \targetconcept, \tsparam_i = \tsparam_{n+1} \right)$ exactly would require access to $\targetconcept$, which the learner does not have.
            
            Given their ignorance of $\targetconcept$, one approach the learner could take would be to construct the relevance function in a way that depends only on their prior $P_{\paramRV}$, for instance, as
            \begin{align}\label{eq:iterR}
                \csim{i}{\tsparam_{n+1}} &= \E{\param \sim P_{\paramRV}}{p\left( \sourcedata_i \vert \param, \tsparam_i = \tsparam_{n+1} \right)}.
            \end{align}
            While \Cref{eq:iterR} will likely not exactly recover the consequences of the pseudo-intervention $\tsparam_i = \targettsparam$, in many cases of practical interest it will tend to correlate with $p\left( \sourcedata_i \vert \targetconcept, \tsparam_i = \tsparam_{n+1} \right)$.\footnote{
                See \Cref{ap:reweighted-rho} for discussion of a counterexample.
            }

            To correct for potential bias in $P_{\paramRV}$, we propose a procedure to iteratively refine the relevance function, outlined in \Cref{algstep:iter-start}--\Cref{algstep:iter-end} of \Cref{alg:dalw}.
            Notice that the source data, which the learner \textit{does} have access to, depend on $\targetconcept$, and so the learner can leverage these data to, for instance, refine the distribution across which the expectation in \Cref{eq:iterR} is taken.
            We propose the learner first evaluate the relevance function using $P_{\paramRV}$, then substitute $P_{\paramRV}$ in the definition of the relevance function with the resulting relevance-weighted posterior (\Cref{def:proxy-post} in \Cref{sec:step3}), reevaluate the relevance function, and repeat this process for a pre-specified number of iterations.
            In \Cref{sec:experiments}, we detail application of this iterative procedure in the context of two synthetic examples.
            While we observe that this procedure is effective in the context of these examples, an important direction for future work is establishing the conditions under which it converges, i.e., the conditions under which a relevance function satisfying \Cref{def:relevance} is available to the learner.
        
    \subsection{Step 3: Computing the r-weighted posterior predictive distribution}\label{sec:step3}
        We can now define the \textbf{relevance- (r-)weighted posterior} and \textbf{r-weighted posterior predictive distribution}.
        \begin{definition}[Relevance- (r-)weighted posterior distribution $P_{\paramRV, \targettsparamRV \vert \sourcedata, \auxInfo}^{\mathcal{R}}$]\label{def:proxy-post}
            The r-weighted posterior distribution $P_{\paramRV, \targettsparamRV \vert \sourcedata, \auxInfo}^{\mathcal{R}}$ is the distribution with density
            \begin{align}
                &p^{\mathcal{R}}(\param, \tsparam_{n+1} \vert \sourcedata, \auxInfo) = \nonumber \\
                &
                \frac{\robustL(\sourcedata, \param, \tsparam = \tsparam_{n+1}) ~ p\left( \auxInfo \vert \tsparam_{n+1} \right) ~ p\left( \param, \tsparam_{n+1} \right)}{\E{\param^{\prime}, \tsparam_{n+1}^{\prime} \sim P_{\paramRV, \tsparamRV_{n+1}}}{\robustL(\sourcedata, \param^{\prime}, \tsparam = \tsparam_{n+1}^{\prime}) ~ p\left( \auxInfo \vert \tsparam_{n+1}^{\prime} \right)}}. \nonumber
            \end{align}
        \end{definition}

        \begin{definition}[Relevance- (r-)weighted posterior predictive distribution $P_{\targetdataRV \vert \sourcedata, \auxInfo}^{\mathcal{R}}$]\label{def:proxy-post-pred}
            The r-weighted posterior predictive distribution $P_{\targetdataRV \vert \sourcedata, \auxInfo}^{\mathcal{R}}$ is the distribution with density
            \begin{align}
                p^{\mathcal{R}}(\targetdata \vert \sourcedata, \auxInfo) &= \E{\param, \tsparam_{n+1} \sim P_{\paramRV, \tsparamRV_{n+1} \vert \sourcedata, \auxInfo}^{\mathcal{R}}}{p(\targetdata \vert \param, \tsparam_{n+1})}. \nonumber
            \end{align}
        \end{definition}

\section{THEORETICAL RESULTS}\label{sec:theory}
    In \Cref{sec:negtransfer}, we introduced a formal framework for assessing the threat of negative transfer.
    In \Cref{sec:dalw}, we introduced a framework for Bayesian transfer learning that uses a pre-specified relevance function to r-weight the likelihood.
    Our goal here is to assess whether r-weighting can effectively reduce the threat of negative transfer, and if so, the conditions under which this is the case.

    To assess the threat of negative transfer to the r-weighted Bayesian transfer learner, we introduce an information gain measure analogous to \Cref{def:pi-tig}, but that measures the degree to which the r-weighted posterior favors $\targetconcept$ with respect to the prior:\footnote{
        See discussion in \Cref{ap:reweighted-ig} for interpretation of $P_{\auxInfoRV}$.
    }
    \begin{definition}[Information gained by the r-weighted Bayesian learner $\rIG$]\label{def:pir-tig}
        The information gained by the r-weighted Bayesian transfer learner about the shared parameter $\targetconcept$ is
        \begin{align}
            &\rIG\left( \targetconcept \right) \equiv \E{\sourcedata, \auxInfo \sim P_{\truedataRV, \auxInfoRV}}{\log{\left( \frac{p^{\mathcal{R}}(\targetconcept \vert \sourcedata, \auxInfo)}{p(\targetconcept)} \right)}}. \nonumber
        \end{align}
    \end{definition}

    Analogously to \Cref{def:transfer}, we say that the r-weighted Bayesian transfer learner experiences negative transfer when $\rIG\left( \targetconcept \right) \leq 0$.

    Below, we provide two results that together show that the relevance function controls the threat of negative transfer.
    \Cref{prop:ig-reweighted} shows that the threat of negative transfer to the r-weighted Bayesian transfer learner depends on misspecification of the r-weighted likelihood, where the misspecification can be interpreted as the degree to which the relevance function corrects for a mismatch between the source and possible target tasks.
    \Cref{thm:reweighting} decomposes this measure of misspecification, showing that it is a negative function of the \textit{fidelity} of the relevance function.
    The proofs of all results are deferred to \Cref{ap:math}.

    Misspecification of the r-weighted likelihood is:
    \begin{definition}[Misspecification of the r-weighted likelihood $\rDelta$]
        The degree to which the r-weighted likelihood is misspecified is $\rDelta \equiv \E{\tsparam_{n+1} \sim P_{\targettsparamRV}}{\kld{P_{\truedataRV}}{P_{\sourcedataRV^{\mathcal{R}\left( \tsparam_{n+1} \right)} \vert \targetconcept, \tsparam = \tsparam_{n+1}}}}$
        where $P_{\sourcedataRV^{\mathcal{R}\left( \tsparam_{n+1} \right)}}$ is the distribution of data resulting from viewing $\csim{i}{\tsparam_{n+1}}$ replicates of each $\sourcedata_i$.
    \end{definition}

    In the r-weighted case, the misspecification stems from the failure of the pseudo-replication to correct for a mismatch in the source tasks (the consequences of $\tsparam^{\star}$) and possible target tasks (the consequences of possible values $\tsparam_{n+1}$).

    \Cref{prop:ig-reweighted} gives a result analogous to \Cref{prop:ig-classic} for the r-weighted case.
    It depends on the following assumptions:
    \begin{assumption}[$\robustL$ is bounded]\label{as:jensen-gap}
        The r-weighted likelihood $\robustL\left( \sourcedata, \param, \tsparam = \tsparam_{n+1} \right)$ is bounded from both below and above: $\exists a, b \in \mathbb{R}^+$ such that $\forall \sourcedata \in \mathscr{D}, \param \in \paramvalues, \tsparam_{n+1} \in \tsparamvalues$, $a \leq \robustL\left( \sourcedata, \param, \tsparam = \tsparam_{n+1} \right) \leq b$.
    \end{assumption}
        
    \informalAssumption{\Cref{as:convergence}}{The proxy is sufficiently informative in the sense that the ``variability'' of $\targettsparamRV \vert \auxInfo$ is smaller than the ``variability'' of $\targettsparamRV$ by a ``large enough'' margin.
        The formal condition is given in \Cref{ap:reweighted-ig}.
    }

    \informalAssumption{\Cref{as:smoothness-reweighted}}{The r-weighted likelihood $\robustL\left( \sourcedata, \param, \tsparam = \tsparam_{n+1} \right)$ is ``smooth enough'' in a neighborhood of $\targetconcept$ \textbf{and} the estimated relevances are not ``too large''.
        The formal condition is given in \Cref{ap:reweighted-ig}.
    }

    \begin{theorem}[Negative transfer with an r-weighted likelihood]\label{prop:ig-reweighted}
        Under \Cref{as:jensen-gap} and \Cref{as:convergence,as:smoothness-reweighted} in \Cref{ap:reweighted-ig},
        $$\rIG\left( \targetconcept \right) \leq A\left(C - \rDelta \right)$$
        where $A$ and $C$ are constants that do not depend on $\rDelta$.
    \end{theorem}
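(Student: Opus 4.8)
The plan is to follow the proof of \Cref{prop:ig-classic}, replacing the ordinary marginal likelihood by an r-weighted, proxy-conditioned analogue. First I would rewrite $\rIG(\targetconcept)$ using the density in \Cref{def:proxy-post}: marginalizing $\tsparam_{n+1}$ out of the numerator, factoring the prior via \Cref{as:theta-psi-ind}(c), and then using $p(\auxInfo \vert \tsparam_{n+1}) \, p(\tsparam_{n+1}) = p(\tsparam_{n+1} \vert \auxInfo) \, p(\auxInfo)$ in both numerator and denominator so the $p(\auxInfo)$ factors cancel, one obtains
\begin{align*}
    \rIG(\targetconcept) = \E{\sourcedata, \auxInfo \sim P_{\truedataRV, \auxInfoRV}}{\log m_{\targetconcept}(\sourcedata, \auxInfo) - \log M(\sourcedata, \auxInfo)},
\end{align*}
where $m_{\targetconcept}(\sourcedata, \auxInfo) \equiv \E{\tsparam_{n+1} \sim P_{\targettsparamRV \vert \auxInfo}}{\robustL(\sourcedata, \targetconcept, \tsparam = \tsparam_{n+1})}$ and $M(\sourcedata, \auxInfo) \equiv \E{\param^{\prime} \sim P_{\paramRV}, \, \tsparam_{n+1}^{\prime} \sim P_{\targettsparamRV \vert \auxInfo}}{\robustL(\sourcedata, \param^{\prime}, \tsparam = \tsparam_{n+1}^{\prime})}$. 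This is exactly the decomposition underlying \Cref{prop:ig-classic}, with $L(\sourcedata, \targetconcept)$ and $\E{\param^{\prime}}{L(\sourcedata, \param^{\prime})}$ replaced by $m_{\targetconcept}$ and $M$ and an extra outer average over $\auxInfo$. Two structural facts from \Cref{fig:setting} keep this tractable: $\sourcedataRV$ and $\auxInfoRV$ are independent (the source data do not depend on $\tsparam_{n+1}$), so expectations over $P_{\truedataRV, \auxInfoRV}$ factor; and $P_{\targettsparamRV \vert \auxInfo}$ (\Cref{eq:step1}) is the learner's model posterior, so the tower property collapses $\E{\auxInfo \sim P_{\auxInfoRV}}{\E{\tsparam_{n+1} \sim P_{\targettsparamRV \vert \auxInfo}}{\cdot}}$ back to $\E{\tsparam_{n+1} \sim P_{\targettsparamRV}}{\cdot}$.

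For the first term I would invoke \Cref{as:jensen-gap}: because $a \leq \robustL \leq b$ uniformly, the gap in Jensen's inequality $\log m_{\targetconcept}(\sourcedata, \auxInfo) \geq \E{\tsparam_{n+1} \sim P_{\targettsparamRV \vert \auxInfo}}{\log \robustL(\sourcedata, \targetconcept, \tsparam = \tsparam_{n+1})}$ is at most a constant $G = G(a, b)$, notably independent of the proxy. Taking $\E{\sourcedata, \auxInfo}{\cdot}$, applying independence and the tower property, and then writing $\robustL(\sourcedata, \targetconcept, \tsparam = \tsparam_{n+1})$ as the density of $P_{\sourcedataRV^{\mathcal{R}(\tsparam_{n+1})} \vert \targetconcept, \tsparam = \tsparam_{n+1}}$ times its normalizing constant $N(\targetconcept, \tsparam_{n+1})$, a Kullback--Leibler identity gives
\begin{align*}
    \E{\sourcedata, \auxInfo}{\log m_{\targetconcept}} \leq - \ent{P_{\truedataRV}} + \E{\tsparam_{n+1} \sim P_{\targettsparamRV}}{\log N(\targetconcept, \tsparam_{n+1})} + G - \rDelta,
\end{align*}
which is where $\rDelta$ enters, exactly as $\cDelta$ did in the proof of \Cref{prop:ig-classic}. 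For the second term I would apply ordinary Jensen, $\log M(\sourcedata, \auxInfo) \geq \E{\param^{\prime} \sim P_{\paramRV}, \, \tsparam_{n+1}^{\prime} \sim P_{\targettsparamRV \vert \auxInfo}}{\log \robustL(\sourcedata, \param^{\prime}, \tsparam = \tsparam_{n+1}^{\prime})}$, and again use independence, the tower property, and the normalizer/Kullback--Leibler identity to obtain $\E{\sourcedata, \auxInfo}{\log M} \geq - \ent{P_{\truedataRV}} + \E{\param^{\prime} \sim P_{\paramRV}, \, \tsparam_{n+1}^{\prime} \sim P_{\targettsparamRV}}{\log N(\param^{\prime}, \tsparam_{n+1}^{\prime})} - \E{\param^{\prime} \sim P_{\paramRV}, \, \tsparam_{n+1}^{\prime} \sim P_{\targettsparamRV}}{\kld{P_{\truedataRV}}{P_{\sourcedataRV^{\mathcal{R}(\tsparam_{n+1}^{\prime})} \vert \param^{\prime}, \tsparam = \tsparam_{n+1}^{\prime}}}}$.

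Subtracting the two bounds, the $\ent{P_{\truedataRV}}$ terms cancel and I am left with $\rIG(\targetconcept) \leq C - \rDelta$, where $C$ collects $G$, the difference of the two $\log N$-expectations, and the averaged Kullback--Leibler term $\E{\param^{\prime}, \tsparam_{n+1}^{\prime}}{\kld{P_{\truedataRV}}{P_{\sourcedataRV^{\mathcal{R}(\tsparam_{n+1}^{\prime})} \vert \param^{\prime}, \tsparam = \tsparam_{n+1}^{\prime}}}}$ --- none of which depends on $\rDelta$ --- i.e., the claimed bound with $A = 1$ (any coarser estimate elsewhere would only inflate $A$). The main obstacle I foresee is establishing that $C$ is genuinely finite and that every step above is licensed: \Cref{as:smoothness-reweighted} (smoothness of $\robustL$ near $\targetconcept$) must be strong enough to bound the averaged Kullback--Leibler term so the Jensen lower bound on $\E{\sourcedata, \auxInfo}{\log M}$ is not vacuous; the ``relevances not too large'' clause, together with \Cref{as:jensen-gap}, must keep each normalizing constant $N(\param, \tsparam_{n+1})$ of the r-weighted likelihood finite, which is where the interaction between the exponents $\csim{i}{\tsparam_{n+1}}$ and the tails of $p(\sourcedata_i \vert \param, \tsparam_i = \tsparam_{n+1})$ has to be controlled; and \Cref{as:convergence} must supply just enough proxy informativeness to guarantee $M(\sourcedata, \auxInfo) > 0$ almost surely under $P_{\truedataRV, \auxInfoRV}$ and hence that the r-weighted posterior and $\rIG(\targetconcept)$ are well-defined --- notable, since the final bound carries no proxy-quality term at all. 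Everything else is the routine algebra of Kullback--Leibler identities and the bounded-variable Jensen gap, carried out in the joint $(\param, \tsparam_{n+1})$ space.
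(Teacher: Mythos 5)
Your argument is essentially correct and would establish a bound of the stated form, but it takes a genuinely different route from the paper's. The paper controls the two Jensen gaps (numerator over $P_{\targettsparamRV \vert \auxInfo}$, denominator over $P_{\paramRV,\targettsparamRV}$) \emph{relative to each other} via H\"{o}lder's defect formula and \Cref{as:convergence} (\Cref{lem:jensen-gap}), so that their net contribution is nonpositive and vanishes from the bound; it then applies \Cref{as:smoothness-reweighted} to the remaining $\log \E{\param \sim P_{\paramRV}}{\robustL}$ term, which is precisely what produces $A = \int_{\paramvalues \backslash N_{\epsilon}(\targetconcept)} p(\param)\,d\param$ and a constant $C$ given by a KL divergence to the prior predictive \emph{restricted to parameters outside the $\epsilon$-neighborhood of $\targetconcept$}. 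You instead bound the numerator's Jensen gap by an absolute constant $G(a,b)$ from \Cref{as:jensen-gap} and apply plain Jensen jointly over $(\param', \tsparam_{n+1}^{\prime})$ to the denominator. What this buys you is a proof that needs neither \Cref{as:convergence} nor \Cref{as:smoothness-reweighted} (the paper's remark that some proxy informativeness is needed for the guarantee would become moot under your argument); what it costs you is sharpness and, more importantly, interpretability of the constant. Your $C$ contains $\E{\param', \tsparam_{n+1}^{\prime} \sim P_{\paramRV, \targettsparamRV}}{\kld{P_{\truedataRV}}{P_{\sourcedataRV^{\mathcal{R}(\tsparam_{n+1}^{\prime})} \vert \param', \tsparam = \tsparam_{n+1}^{\prime}}}}$ averaged over the \emph{full} prior, including $\param'$ arbitrarily close to $\targetconcept$; by continuity that expectation carries a contribution of roughly $\left( \int_{N_{\epsilon}(\targetconcept)} p(\param)\,d\param \right) \rDelta$, so your ``constant'' is entangled with $\rDelta$ and the honest form of your bound is again $\left(1 - \int_{N_{\epsilon}(\targetconcept)} p(\param)\,d\param\right)\left(\tilde{C} - \rDelta\right)$ plus additive constants --- which is exactly the separation that \Cref{as:smoothness-reweighted} is designed to extract cleanly. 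Two smaller points: your explicit tracking of the normalizing constants $N(\param, \tsparam_{n+1})$ of $\robustL$ is more careful than the paper, which (consistently with its proof of \Cref{thm:reweighting}) takes $P_{\sourcedataRV^{\mathcal{R}(\tsparam_{n+1})}}$ to have density exactly $\robustL$ so that these terms do not appear; and your denominator $M(\sourcedata,\auxInfo)$ is taken over $P_{\targettsparamRV \vert \auxInfo}$ where the paper's displayed derivation uses the prior $P_{\targettsparamRV}$ --- after Jensen and the tower property both reduce to the same expression, so nothing breaks, but you should be aware the two derivations start from slightly different readings of \Cref{def:proxy-post}.
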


    \Cref{thm:reweighting} analyzes the effect of $\mathcal{R}$ on $\rDelta$.
    The role of $\mathcal{R}$ in mitigating negative transfer depends on the \textit{fidelity} of the relevance function:   
    
    \informalAssumption{\Cref{def:rho}}{$\rho^{\mathcal{R}}$ is a measure of the fidelity of the relevance function, i.e., the extent of the correlation of $\csim{i}{\tsparam_{n+1}}$ with $p(\sourcedata_i \vert \targetconcept, \tsparam_i = \tsparam_{n+1})$ in expectation with respect to $P_{\truedataRV,\tsparamRV_{n+1}}$.
        The formal definition is given in \Cref{ap:reweighted-rho}.
    }

    \begin{proposition}[Negative transfer is reduced by high-fidelity relevance functions]\label{thm:reweighting}
        $\rDelta$ is a negative function of $\rho^{\mathcal{R}}$.
        In particular,
        \begin{align}
            \rDelta = \E{}{\mathrm{ESS}\left( \sourcedata, \tsparam_{n+1} \right) \mathrm{DIS}\left( \sourcedata, \tsparam_{n+1}\right)} - n \rho^{\mathcal{R}} + D \nonumber
        \end{align}
        where $\mathrm{ESS}\left( \sourcedata, \tsparam_{n+1} \right) \equiv \sum_{i=1}^n \csim{i}{\tsparam_{n+1}}$ is the effective sample size induced by the relevance function $\mathcal{R}$ evaluated on the sample $\sourcedata$ and task parameter $\tsparam_{n+1}$, $\mathrm{DIS}\left( \sourcedata, \tsparam_{n+1} \right) \equiv - \log{\left( p\left( \sourcedata \vert \targetconcept, \tsparam = \tsparam_{n+1} \right) \right)}$ captures the dissimilarity of the source data to the target task characterized by $\tsparam_{n+1}$, the expectation is taken with respect to $P_{\truedataRV,\tsparamRV_{n+1}}$, and the constant $D$ does not depend on $\mathcal{R}$.
    \end{proposition}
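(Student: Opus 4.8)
The statement is an exact identity, so the plan is to unfold $\rDelta$ according to its definition and then regroup terms; there is no approximation step, and the smoothness-type hypotheses are needed only to keep the objects below finite and well-defined (in particular \Cref{as:jensen-gap} and the ``relevances not too large'' clause of \Cref{as:smoothness-reweighted}, which guarantee the normalizer of the r-weighted model is finite). Write $q_{\tsparam_{n+1}}(\cdot) \equiv p(\cdot \mid \targetconcept, \tsparam_{n+1})$ for the per-observation likelihood after the pseudo-intervention and $w_i \equiv \csim{i}{\tsparam_{n+1}}$. Because the source likelihood factorizes over observations given $\param$ (\Cref{as:theta-psi-ind}(b)), the density of $P_{\sourcedataRV^{\mathcal{R}(\tsparam_{n+1})} \mid \targetconcept, \tsparam = \tsparam_{n+1}}$ is proportional to $\robustL(\sourcedata, \targetconcept, \tsparam = \tsparam_{n+1}) = \prod_i q_{\tsparam_{n+1}}(\sourcedata_i)^{w_i}$, and hence each inner divergence unfolds as
\begin{align}
    \kld{P_{\truedataRV}}{P_{\sourcedataRV^{\mathcal{R}(\tsparam_{n+1})} \mid \targetconcept, \tsparam = \tsparam_{n+1}}} = -\ent{P_{\truedataRV}} + \log Z(\tsparam_{n+1}) - \E{\sourcedata \sim P_{\truedataRV}}{\textstyle\sum_i w_i \log q_{\tsparam_{n+1}}(\sourcedata_i)}, \nonumber
\end{align}
where $Z(\tsparam_{n+1})$ is the normalizing constant of the r-weighted model. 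The term $-\ent{P_{\truedataRV}}$ is $\mathcal{R}$-free and will go toward $D$.

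The heart of the argument is a single algebraic rearrangement of the last term. With $a_i = w_i$ and $b_i = \log q_{\tsparam_{n+1}}(\sourcedata_i)$, apply $\sum_i a_i b_i = \big( \sum_i a_i \big)\big( \sum_j b_j \big) - \sum_{i \neq j} a_i b_j$ and identify $\sum_i w_i = \mathrm{ESS}(\sourcedata, \tsparam_{n+1})$ and $-\sum_i \log q_{\tsparam_{n+1}}(\sourcedata_i) = -\log p(\sourcedata \mid \targetconcept, \tsparam = \tsparam_{n+1}) = \mathrm{DIS}(\sourcedata, \tsparam_{n+1})$, which gives
\begin{align}
    -\E{\sourcedata \sim P_{\truedataRV}}{\textstyle\sum_i w_i \log q_{\tsparam_{n+1}}(\sourcedata_i)} = \E{\sourcedata \sim P_{\truedataRV}}{\mathrm{ESS}(\sourcedata, \tsparam_{n+1})\,\mathrm{DIS}(\sourcedata, \tsparam_{n+1})} + \E{\sourcedata \sim P_{\truedataRV}}{\textstyle\sum_{i \neq j} w_i \log q_{\tsparam_{n+1}}(\sourcedata_j)}. \nonumber
\end{align}
Taking the outer expectation over $\tsparam_{n+1} \sim P_{\targettsparamRV}$ produces the $\E{}{\mathrm{ESS} \cdot \mathrm{DIS}}$ term of the claim. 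It then remains to show that $\E{\tsparam_{n+1}}{\log Z(\tsparam_{n+1})}$ together with the expected cross-observation remainder equals $-n\rho^{\mathcal{R}}$ up to an $\mathcal{R}$-independent constant, which I absorb, with $-\ent{P_{\truedataRV}}$, into $D$. Here I would use the conditional independence of the $\sourcedata_i$ given the parameters (\Cref{as:theta-psi-ind}(b)) to evaluate the remaining expectations: the within-observation pieces collapse to covariances $\mathrm{Cov}\big( \csim{i}{\tsparam_{n+1}},\, \log p(\sourcedata_i \mid \targetconcept, \tsparam_i = \tsparam_{n+1}) \big)$ — precisely the correlation that \Cref{def:relevance} and \Cref{def:rho} call the fidelity — while the across-observation pieces reduce to products of marginal moments, and matching these against the appendix definition of $\rho^{\mathcal{R}}$ yields $-n\rho^{\mathcal{R}}$. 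Assembling the pieces gives $\rDelta = \E{}{\mathrm{ESS} \cdot \mathrm{DIS}} - n\rho^{\mathcal{R}} + D$; since neither $\E{}{\mathrm{ESS}\cdot\mathrm{DIS}}$ nor $D$ involves $\rho^{\mathcal{R}}$, $\rDelta$ is affine and decreasing in $\rho^{\mathcal{R}}$, which is the ``negative function'' assertion.

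The main obstacle is this last matching step. The relevance weights $\csim{i}{\tsparam_{n+1}}$ are themselves functions of the data (cf.\ \Cref{eq:iterR}), so the expectations do not factor naively and one must separate within-observation from across-observation second moments with care; and the normalizer $Z(\tsparam_{n+1})$ of the r-weighted model is $\mathcal{R}$-dependent, so isolating its $\mathcal{R}$-free part — and verifying that the residual is exactly the quantity that the appendix defines as $\rho^{\mathcal{R}}$, with the intended interpretation of \Cref{def:rho} — is where the bookkeeping, including the precise choice of $D$ and of the normalization of $P_{\sourcedataRV^{\mathcal{R}(\tsparam_{n+1})}}$, must be pinned down (and where sign and off-by-$n$ errors are most likely). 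Everything else follows routinely once the decomposition above is in place.
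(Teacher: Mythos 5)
Your overall shape is right---unfold $\rDelta$ into an entropy term plus $-\E{}{\sum_i \csim{i}{\tsparam_{n+1}}\log p(\sourcedata_i\vert\targetconcept,\tsparam_i=\tsparam_{n+1})}$, then reorganize that sum into an $\mathrm{ESS}\cdot\mathrm{DIS}$ piece and a fidelity piece---but the step you defer as ``the main obstacle'' is the entire content of the proof, and your plan for it would not go through. Two concrete problems. First, the identity you pick, $\sum_i a_i b_i = \left(\sum_i a_i\right)\left(\sum_j b_j\right) - \sum_{i\neq j} a_i b_j$, is a tautology whose remainder $\sum_{i\neq j} a_i b_j$ is neither $\mathcal{R}$-free nor reducible to ``products of marginal moments'': it equals $\left(\sum_i a_i\right)\left(\sum_j b_j\right) - \sum_i a_i b_i$, i.e.\ it is $\mathrm{ESS}\cdot\mathrm{DIS}$-type and covariance-type material all over again, so matching it to $-n\rho^{\mathcal{R}}$ plus an $\mathcal{R}$-independent constant is circular. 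The tool the paper actually uses is the empirical-covariance identity $\sum_i a_i b_i = n\,\widehat{\mathrm{cov}}(a,b) + \tfrac{1}{n}\left(\sum_i a_i\right)\left(\sum_i b_i\right)$, which is exactly what \Cref{def:rho} is built for: $\rho^{\mathcal{R}}$ is \emph{defined} as the expectation over $P_{\truedataRV,\tsparamRV_{n+1}}$ of the sample covariance of your $w_i$ and $\log q_{\tsparam_{n+1}}(\sourcedata_i)$ under the uniform distribution on indices, so one line of algebra converts $-\E{}{\sum_i w_i \log q_{\tsparam_{n+1}}(\sourcedata_i)}$ into the product term minus $n\rho^{\mathcal{R}}$, with $D = -\ent{P_{\truedataRV}}$ and nothing left over. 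Second, your normalizer $Z(\tsparam_{n+1})$ is a genuine obstruction, not bookkeeping: it is an integral of $\prod_i q_{\tsparam_{n+1}}(\cdot)^{w_i}$ and hence depends on $\mathcal{R}$, so it cannot be absorbed into $D$, and there is no reason it combines with your cross terms to yield $-n\rho^{\mathcal{R}}$. The paper sidesteps this by taking the density of $P_{\sourcedataRV^{\mathcal{R}(\tsparam_{n+1})} \vert \targetconcept, \tsparam = \tsparam_{n+1}}$ to be the pseudo-replicated likelihood itself, so that the divergence is $-\E{\sourcedata\sim P_{\truedataRV}}{\log\robustL(\sourcedata,\targetconcept,\tsparam=\tsparam_{n+1})} - \ent{P_{\truedataRV}}$ with no $\log Z$ term; you need to adopt and state that convention rather than introduce a normalizing constant you cannot eliminate.

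Two smaller points. \Cref{as:jensen-gap} and \Cref{as:smoothness-reweighted} play no role here: the proposition is a pure identity resting only on \Cref{as:theta-psi-ind} and the definitions, and invoking them obscures that. And your worry about off-by-$n$ factors is well placed, though it cuts against the paper rather than your route: the covariance identity produces $\tfrac{1}{n}\E{}{\mathrm{ESS}\left(\sourcedata,\tsparam_{n+1}\right)\mathrm{DIS}\left(\sourcedata,\tsparam_{n+1}\right)}$ for the product term, whereas the stated proposition and the appendix derivation carry that expectation without the $\tfrac{1}{n}$; either way the $\mathcal{R}$-dependence through $\rho^{\mathcal{R}}$ is unaffected, which is what the ``negative function'' claim needs.
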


    \begin{remark}[Weakly informative proxies mitigate negative transfer]
        \Cref{thm:reweighting} shows that $\rDelta$ does not depend on the accuracy of the learner's inferences about $\targettsparam$, i.e., on the informativeness of the proxy information.
        Informative proxies facilitate targeted inference insofar as they facilitate estimation of the target task parameter, but do not improve the r-weighted learner's ability to recover the shared parameter from the source data.
        \algAbbrev{}'s provable advantage over classic Bayesian inference does depend on the availability of some proxy information only to satisfy \Cref{as:convergence}, required in the proof of \Cref{prop:ig-reweighted}: If the available proxy information is not somewhat informative, the magnitude of $\rDelta$ does not necessarily imply the degree of the threat of negative transfer.
    \end{remark}

\section{EXAMPLES}\label{sec:experiments}
    We here demonstrate application of \algAbbrev{} in two synthetic settings and on one real-world dataset.
    Additional details of all examples are provided in \Cref{ap:experiments}.

    Taken together, these examples demonstrate that \algAbbrev{} can significantly reduce the threat of negative transfer, and that its effectiveness in doing so is robust to unreliable and misleading proxy information.
    In all examples, we defined the relevance function in a purely source data-dependent way, illustrating the availability of effective relevance functions in settings of practical interest.

    \subsection{Treatment effect estimation}\label{sec:treatment-effect}
        To continue with our motivating example, we first demonstrate application of \algAbbrev{} to treatment effect estimation using similar modeling paradigms to those used in clinical prediction tasks \citep{gunn-sandell_practical_2023}.
        We first apply \algAbbrev{} in a synthetic setting that allows us to manipulate factors like the risk of negative transfer.
        We then apply \algAbbrev{} to a real-world dataset of smoking behavior.

        In both cases, we consider the treatment effect to be transferable, i.e., the parameter corresponding to the size of the influence of the treatment on outcomes is shared across tasks.
        Here, negative transfer refers to a situation where learning from the source data causes the learner to believe that the treatment has an effect opposite to its true effect (e.g., a negative rather than positive treatment effect).
    
        \paragraph{Linear regression.}
            The synthetic data in this example are generated according to the model
            $$\sourcey_i \vert \sourcex_i \sim \mathcal{N}\left( \targetconcept \sourcex_{i,1} + \truetsparam_i \sourcex_{i,2}, 1 \right)$$
            where the shared parameter $\targetconcept$ represents the effect of a synthetic treatment $\sourcex_{i,1}$ and $\truetsparam_i$ represents the effect of a synthetic confounder (e.g., quality of care) $\sourcex_{i,2}$.
            We computed the relevances as $\csim{i}{\tsparam_{n+1}} \propto \E{\param \sim \widehat{P^{\mathcal{R}}_{\paramRV}}}{p\left( \sourcedata_i \vert \param, \tsparam_i = \tsparam_{n+1} \right)}$ where $\widehat{P^{\mathcal{R}}_{\paramRV}}$ was formed using the iterative procedure described in \Cref{sec:step2}.

            \noindent \textit{Alleviating the risk of negative transfer:}
            \Cref{fig:betabinom} shows how $\rIG$ compares with $\cIG$ as a function of the risk of negative transfer and the representativeness of the target task in the distribution of source tasks.
            \begin{figure}[t!]
                \begin{subfigure}{\linewidth}
                    \includegraphics[width=\linewidth]{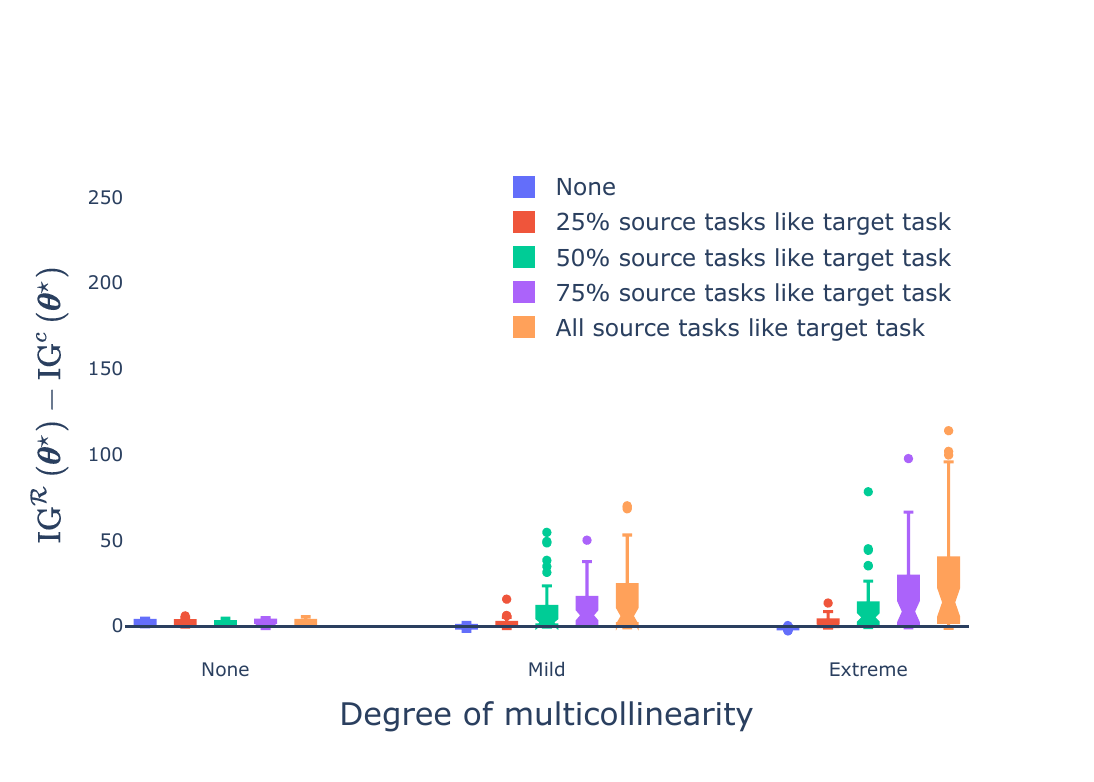}
                    \caption{\label{fig:betabinom}
                        Alleviating the risk of negative transfer.
                        In all cases, proxy information is uncontaminated.
                    }
                \end{subfigure} \hfill \begin{subfigure}{\linewidth}
                    \includegraphics[width=\linewidth]{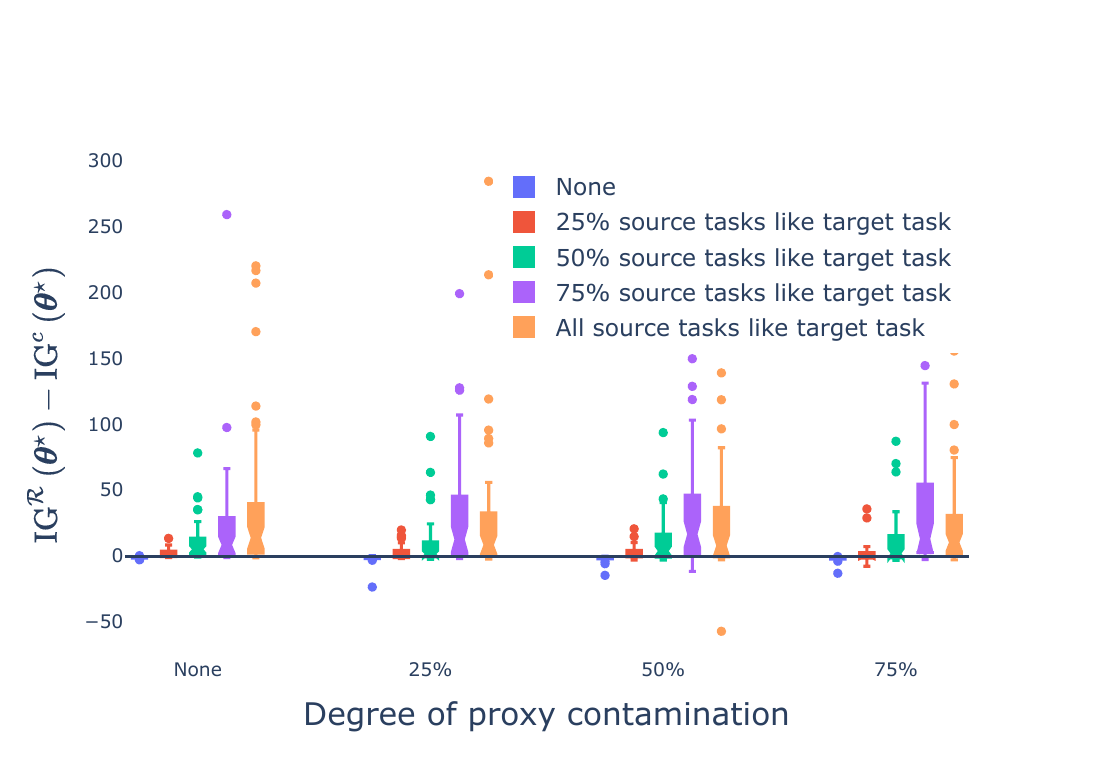}
                    \caption{\label{fig:linear-contamination}
                        Robustness to noisy proxy information.
                        In all cases, there is an extreme degree of multicollinearity (i.e., large threat of negative transfer).
                    }
                \end{subfigure}
                \caption{Advantage of the r-weighted learner in the linear regression setting.
                    Each box includes results from 50 simulations and shows the interquartile region (boxes) and outliers (points) of $\rIG\left( \targetconcept \right) - \cIG\left( \targetconcept \right)$.
                    In each simulation, $\targettsparam$, $\sourcedata$, and $\auxInfo$ are randomly regenerated.
                }
            \end{figure}
            To induce the risk of negative transfer, we manipulated the degree of multicollinearity between $\sourcex_{(\cdot,1)}$ and $\sourcex_{(\cdot,2)}$: More multicollinearity makes $\targetconcept$ and $\targettsparam$ harder to separately identify, so we interpret this as a higher risk of negative transfer.
            We also varied the distribution of source tasks.
            When $p\%$ of tasks resemble the target task, $1-p\%$ of tasks are set to a value that is well-represented by $P_{\tsparamRV}$.
            In this sense, our results are a somewhat conservative test of \algAbbrev{}.

            When there is no multicollinearity, the classic learner is not at risk of negative transfer, and performs on par with the learner with an r-weighted likelihood.
            When all source tasks are well-represented in the learner's prior (blue box), the classic learner's prior is well-specified and they perform on par with the r-weighted learner.
            When there is a risk of negative transfer, $\rIG$ is generally higher, especially when many source tasks resemble the target task (and the classic learner's prior $P_{\tsparamRV}$ is more misspecified).

            \noindent \textit{Robustness to noisy proxy information:}
            The synthetic proxy information represents feedback from a domain expert.
            While domain experts may not be able to articulate precise knowledge of the target task, they can often provide intuitive assessments \citep{kahneman_conditions_2009} such as the degree to which an outcome is representative of a given situation \citep{tversky_judgment_1974}.
            Our synthetic proxy represents a domain expert who is presented with a hypothetical outcome and asked the degree to which it is representative of the target task on a scale of $0$--$7$.
            To assess the robustness of \algAbbrev{} to noisy proxies, we contaminated a percentage of these synthetic judgments.
            The percentage of contaminated proxy values is unknown to the learner, who always models the proxy information as completely uncontaminated.

            In line with our result in \Cref{thm:reweighting}, \Cref{fig:linear-contamination} shows that noisy proxy information does not affect the relative advantage of the r-weighted learner: Regardless of the degree of proxy contamination, the r-weighted learner tends to outperform the classic learner.

        \paragraph{Predicting smoking behavior.}
            We also applied \algAbbrev{} to predict smoking behavior in a dataset from \citet{hasselblad_meta-analysis_1998} provided by the R package \texttt{netmeta} \citep{balduzzi_netmeta_2023}, which consists of data from 24 studies on the number of patients who stopped smoking after receiving one of four treatments.\footnote{This example was inspired by the example detailed in \citet{holzhauer_network_2025}.
                The code used the package \texttt{brms} \citep{burkner_brms_2017} and Stan modeling language \citep{stan_stan_2024}.
            }
            Each study includes data from patients who received some but not all treatments.
            We considered each study a separate task.
            Each observation is indexed by study and treatment (so $\sourcey_i$ is the number of patients who stopped smoking after receiving a given treatment in a given study and $\tsparam_i = \tsparam_j$ if $i$ and $j$ index data from different treatments administered as part of the same study).
            We modeled the data as
            $$\sourcey_i \vert \sourcex_i, \param, \tsparam_i \sim \mathrm{Binomial}\left( \mathrm{sigmoid}\left( \param \sourcex_{i,(1:4)}^{\top} + \tsparam_i \right), N_i \right)$$
            where $\sourcex_{i,(1:4)}$ are indicators of the treatment received and $N_i$ is the number of patients who received the indicated treatment in the indicated study.
            We considered 24 different partitions of the data into source and target data, with each partition treating data from one study as target data and data from the remaining 23 studies as source data.
            We defined the relevance function as
            $$\csim{i}{\tsparam_{n+1}} = \mathrm{sigmoid}\left( n \frac{p\left( \sourcedata_i \vert \param = 0, \tsparam_i = \tsparam_{n+1} \right)}{p\left( \sourcedata \vert \param = 0, \tsparam = \tsparam_{n+1} \right)} \right).$$

            \begin{figure}[t!]
                \centering
                \includegraphics[width=\linewidth]{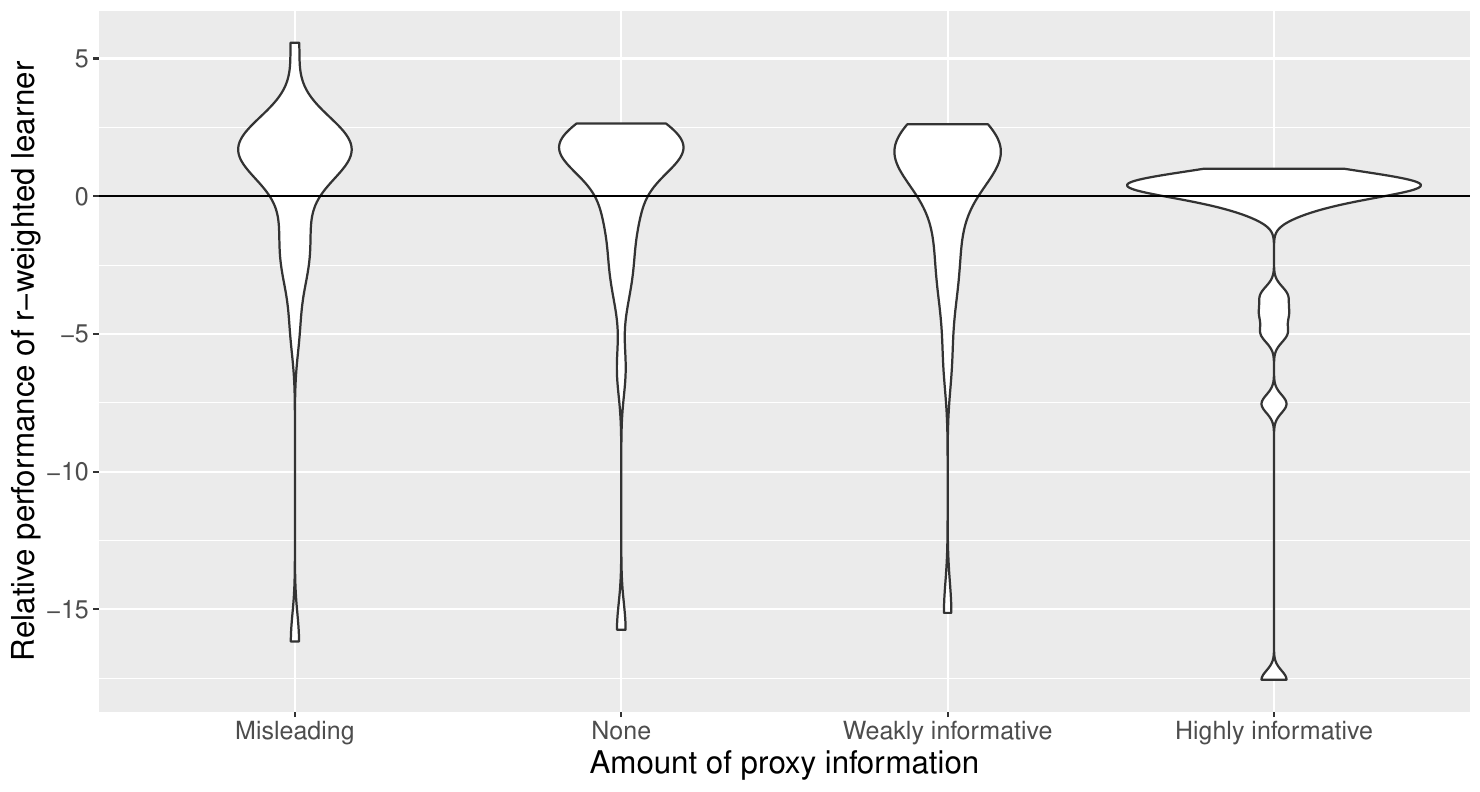}
                \caption{\label{fig:smoking}
                    Advantage of the r-weighted learner in the dataset of smoking behavior.
                    Each plot shows the distribution of values of $\log\left( \frac{p^{\mathcal{R}}\left( \sourcedata_{n+1} \vert \sourcedata, \auxInfo \right)}{p\left( \sourcedata_{n+1} \vert \sourcedata, \auxInfo \right)} \right)$ across 24 partitions of source/target data.
                }
            \end{figure}
            \Cref{fig:smoking} shows the relative performance of the r-weighted and classic Bayesian transfer learners as a function of the informativeness of the proxy information.
            Unlike in our synthetic example, here we do not have access to the true value $\targetconcept$ and so cannot directly compute $\rIG\left( \targetconcept \right)$ and $\cIG\left( \targetconcept \right)$.
            Instead, we assess how well the two methods can predict the outcome in the target task.
            The classic learner here has the advantage of prior source information in the form of knowledge of which observations belong to the same task.
            We also do not anticipate a substantial threat of negative transfer here.
            Nevertheless, the r-weighted learner outperforms the classic learner in the majority of cases.

            \noindent \textit{Robustness to noisy proxy information:}
            The synthetic proxy information represents an imprecise estimate of the value of $\targettsparam$.
            Highly informative proxy information refers to more precise estimates than weakly informative proxy information.
            To assess the robustness of PROMPT to misleading proxies, we added a bias to some synthetic estimates.
            While the learner is aware of the degree of precision of an estimate, they are unaware of the potential presence of bias.
                
            \Cref{thm:reweighting} showed that the extent of negative transfer for the r-weighted learner does not depend on the informativeness of the proxy information.
            \Cref{fig:smoking} shows that the r-weighted learner's advantage appears to actually decrease with the informativeness of the proxy information.
            Further inspection revealed that, in this example, the r-weighted learner's performance is not sensitive to the amount of proxy information, and the difference reflects the classic learner's higher performance in the presence of more informative proxy information.\footnote{This difference between the r-weighted and classic learners' sensitivity to proxy information is largely accounted for by an effective difference in model structure; as we describe in \Cref{ap:smoking}, the r-weighted learner learns a model with a single intercept rather than a separate linear effect for each study.}
            Further understanding this insensitivity to proxy information, as well as the nature of the tasks that lead the r-weighted learner to perform much worse than the classic learner, is a direction for future investigation.

    \subsection{Gaussian process regression}\label{sec:gp}
        We next demonstrate application of \algAbbrev{} in a Gaussian Process (GP) regression setting with a composite kernel.
        Additional details and results are given in \Cref{ap:gp}.
        Data were generated according to the model
        $$\sourcey_i \vert \sourcex_i \sim \mathcal{GP}\left( 0, \mathbf{k}(\sourcex_i, \sourcex) \right)$$
        where $\mathbf{k}(\sourcex_i, \sourcex) = \mathrm{RBF}_{\param}(\sourcex_i, \sourcex) + \mathrm{RBF}_{\tsparam_i}(\sourcex_i, \sourcex)$ and $\mathrm{RBF}_l$ is the radial basis function with lengthscale $l$.\footnote{The kernel was renormalized to have an amplitude of 1.}
        This setting poses a risk of negative transfer because the shared and task parameters act in combination to determine the smoothness of the sampled functions \citep{sloman_bayesian_2024}.
        We used the same methods to generate proxy information and specify the relevance function as for the linear regression example, but varied the number of iterations used for refinement of the relevance function.
        \Cref{fig:gp-supp} in \Cref{ap:gp} shows how $\rIG$ compares with $\cIG$ as a function of each of multiple simulation parameters.
        In all cases, the r-weighted learner tends to identify the value of the shared parameter as well as, and usually more successfully than, the classic learner.

        \noindent \textit{Robustness to noisy proxy information:}
        \Cref{fig:gp-proxy-noise} in \Cref{ap:gp} shows how $\rIG$ compares with $\cIG$ as a function of the amount of proxy contamination.
        In line with our result in \Cref{thm:reweighting}, the r-weighted learner outperforms the classic Bayesian learner even in the presence of substantial proxy contamination (although their advantage is greatest in the absence of proxy contamination).

\section{DISCUSSION}
    We presented a Bayesian perspective on negative transfer, from which we showed that negative transfer can arise from misspecified prior source information.
    Based on this insight, we developed \algAbbrev{}, a novel framework for Bayesian transfer learning which alleviates the learner's dependence on prior source information.
    The framework of \algAbbrev{} can accommodate a variety of relevance functions and forms of proxy information.
    \algAbbrev{}'s provable advantage depends on the fidelity of the specified relevance function.
    In \Cref{sec:experiments}, we provided concrete examples of possible relevance functions and demonstrated \algAbbrev{}'s robustness to noisy and misleading proxies in a variety of settings.
    We found that in practice we were able to specify relevance functions of sufficiently high fidelity to reduce negative transfer.
    Ultimately, however, there may exist situations where such a relevance function is unavailable (for example, if $\param$ and $\tsparam$ interact such that the direction of the gradient of predictions with respect to $\tsparam$ depends on $\param$).
    As \Cref{tab:prompt} shows, in such cases, the practitioner must make a choice about whether they can more confidently specify the prior over source task parameters or the relevance function.
    The development of a more systematic framework for defining the relevance function is a promising avenue for future work.
    Many transfer learning applications leverage high-dimensional, non-linear datasets \citep{suder_bayesian_2023} and future work should in particular look to the development of a scalable framework for applying and evaluating \algAbbrev{} in such contexts.

\begin{acknowledgements}
    The authors thank Ayush Bharti and Sammie Katt for helpful feedback on an initial draft, and several anonymous reviewers for helpful comments.
    This work was supported by the Research Council of Finland Flagship programme: Finnish Center for Artificial Intelligence FCAI and decisions 358958, 359567.
    SJS and SK were supported by the UKRI Turing AI World-Leading Researcher Fellowship, [EP/W002973/1].
    This work used the Computational Shared Facility at The University of Manchester.
\end{acknowledgements}

\bibliography{bibliography}
\emptythanks
\newcounter{footnotecounter}
\setcounter{footnotecounter}{\value{footnote}}
\setcounter{footnote}{0}
\onecolumn
\title{\ourTitle\\(Supplementary Material)}
\appendix
\bibliographystyle{plainnat}

\maketitle

The appendix is organized as follows:
\begin{itemize}
    \item In \Cref{sec:related-work}, we discuss related works in more detail.
    \item In \Cref{ap:math}, we provide proofs of all our mathematical results.
    \item In \Cref{ap:experiments}, we provide further details of the examples described in \Cref{sec:experiments}.
        \Cref{ap:gp} additionally provides the results of the GP regression example.
\end{itemize}

\section{RELATED WORK}\label{sec:related-work}
    \paragraph{Likelihood weighting}
        has been applied for purposes that include potential model misspecification \citep{grunwald_safe_2011,miller_robust_2019,dewaskar_robustifying_2023}, potential conflation of transferable and task-specific effects \citep{ibrahim_power_2000,ibrahim_optimality_2011,ibrahim_power_2014,suder_bayesian_2023}, model selection \citep{ibrahim_power_2014}, and increased efficiency of MCMC samplers~\citep{schuster_markov_2021}.

    \paragraph{Probabilistic meta-learning}
        \citep{gordon_meta-learning_2019} is a paradigm in which a meta-learner simultaneously learns a transferable parameter value and a distribution over task parameter values.
        Unlike \algAbbrev{}, this framework assumes the data sources are known in the sense that each data point can be indexed by its task.
        This distinction also sets us apart from other Bayesian meta-learning approaches \citep{grant_recasting_2018,yoon_bayesian_2018,patacchiola_bayesian_2020}.
        Moreover, the aim of probabilistic meta-learning is to learn a distribution over task parameters.
        When the target task will arise from the same distribution as the source tasks, probabilistic meta-learning facilitates good performance on average across tasks.
        However, the goal of \algAbbrev{} is to provide a posterior predictive distribution tailored to a target task that may not arise from the same distribution as the source tasks.

    \paragraph{Using domain similarity for domain adaptation.}
        Many existing theoretical bounds for domain adaptation rely on the similarity between source and target tasks \citep{redko_survey_2019}.
        Some approaches to domain adaptation use similarity of covariates (inputs) in the target and source tasks to weight source data during training \citep{plank_effective_2011,ponomareva_biographies_2012,remus_domain_2012,ruder_learning_2017} or importance sampling techniques \citep{quinonero-candela_dataset_2009}.
        While this can be effective in cases of pure covariate shift (a change in the distribution of inputs), our formulation allows for differences in the map between covariates and outcomes that cannot be detected on the basis of covariate information alone.

    \paragraph{Proximal causal learning}
        is a paradigm that uses proxy information to learn causal effects \citep{kuroki_measurement_2014,tchetgen_introduction_2020,alabdulmohsin_adapting_2023,tsai2024proxy}.
        Our setting is similar to the multi-domain adaptation setting of \citet{tsai2024proxy}.
        We differ in that (i) we assume data sources are unknown, while they assume data can be indexed by its task, and (ii) we assume the presence of both shared and task parameters, while they do not distinguish between these.
        While our method for estimating the task parameter also leverages proxy methods, we differ in our usage of reweighting methods to estimate the shared parameter, which facilitates robust estimation without requiring additional proxy information.

        As shown in \Cref{fig:setting}, our formulation is stated in terms of the dependencies between shared parameters, task parameters, and observations, and so our work shares conceptual connections with the more general paradigm of causal inference.
        For instance, conceptualizing r-weighting as a pseudo-intervention requires conceptualizing the task parameters as a cause of the observations.
        We do not however require that either the shared or task parameters parameterize the causal effect of one observable variable on another; these parameters can represent any unobservable factor influencing the data.

    \paragraph{Human-in-the-loop learning.}
        In many applications, domain experts are a viable source of proxy information, and so our work can be tied to human-in-the-loop machine learning \citep{wu_survey_2022}.
        Like us, some human-in-the-loop methods leverage expert feedback in a Bayesian framework. 
        For example,~\cite{nahal_human-in-the-loop_2024} use expert feedback for learning in out-of-distribution settings, while
        \cite{sundin_improving_2018} query experts about the relevance of a given feature for outcome prediction.

\section{MATHEMATICAL DETAILS}\label{ap:math}
    \subsection{Definitions}
        \begin{itemize}
            \item $\ent{P}$ is the entropy of distribution $P$ with density $p$:
                $$\ent{P} = - \E{\sourcex \sim P}{\log{\left( p(\sourcex) \right)}}$$
            \item $\crossent{P}{Q}$ is the cross-entropy from distribution $P$ to distribution $Q$ with density $q$:
                $$\crossent{P}{Q} = - \E{\sourcex \sim P}{\log{\left( q(\sourcex) \right)}}$$
            \item $\kld{P}{Q}$ is the Kullback-Leibler divergence from distribution $P$ with density $p$, to distribution $Q$ with density $q$:
                $$\kld{P}{Q} = \E{\sourcex \sim P}{\log{\frac{p(\sourcex)}{q(\sourcex)}}}$$
        \end{itemize}

    \subsection{Proof of Theorem 2.4}\label{ap:classic-ig}
        The information gain achieved by the classic Bayesian learner (\Cref{def:pi-tig}) can be written as:
        \begin{align}
            \cIG\left( \targetconcept \right) &= \E{\sourcedata \sim P_{\truedataRV}}{\log{\left( \frac{p(\targetconcept \vert \sourcedata)}{p(\targetconcept)} \right)}} \nonumber \\
            &= \E{\sourcedata \sim P_{\truedataRV}}{\log{\left( \frac{\frac{L\left( \sourcedata, \targetconcept \right) ~ p\left( \targetconcept \right)}{\E{\param \sim P_{\paramRV}}{L\left( \sourcedata, \param \right)}}}{p\left( \targetconcept \right)} \right)}} \nonumber \\
            &= \E{\sourcedata \sim P_{\truedataRV}}{\log{\left( \frac{L\left( \sourcedata, \targetconcept \right)}{\E{\param \sim P_{\paramRV}}{L\left( \sourcedata, \param \right)}} \right)}} \nonumber
        \end{align}
    
        The proof follows the proof of Proposition 4.1 and Theorem 4.5 of \citet{sloman_bayesian_2024}.
        It depends on the following definitions:

        \newcommand{\neighborhood}[1]{N_{\epsilon} \left( #1 \right)}
        \newcommand{\restrictPrior}[1]{P^{#1}_{\paramRV}}
        \begin{definition}[$\epsilon$-neighborhood of $\param$ $\neighborhood{\param}$ (Definition 4.2 of \citealt{sloman_bayesian_2024})]
            $\neighborhood{\param} \equiv \{ \param^{\prime} \in \paramvalues ~ \vert ~ d(\param, \param^{\prime}) < \epsilon \}$, where $d$ is a suitable distance measure, is the $\epsilon$-neighborhood of $\param$.
        \end{definition}

        \begin{definition}[$\restrictPrior{\mathscr{A}}$ (modification of Definition 4.3 of \citealt{sloman_bayesian_2024})]
            $\restrictPrior{\mathscr{A}}$ refers to the distribution of $\paramRV$ obtained by restricting the support of the learner's prior to the set $\mathscr{A}$, under which
            $$p^{\mathscr{A}}(\param) \equiv \frac{p(\param)}{\int_{\mathscr{A}} p(\param) ~ d\param}$$ for any $\param \in \mathscr{A}$.
        \end{definition}
    
        \begin{assumption}[Smoothness in parameter space (Assumption 4.4 of \citet{sloman_bayesian_2024})]\label{as:smoothness}
            There exists some $\epsilon > 0$ such that
            \begin{align}
                &\E{\sourcedata \sim P_{\truedataRV}}{\log{\left( \E{\param \sim P_{\paramRV}}{L(\sourcedata, \param)} \right)}} \nonumber \\ 
                &\geq \E{\sourcedata \sim P_{\truedataRV}}{\left( \int_{N_{\epsilon} \left( \targetconcept \right)} p(\param) ~ d\param \right) \log{\left( L(\sourcedata, \targetconcept) \right)} + \left( \int_{\paramvalues \backslash N_{\epsilon} \left( \targetconcept \right)} p(\param) ~ d\param \right) \log{\left( \E{\param \sim P^{\paramvalues \backslash N_{\epsilon} \left( \targetconcept \right)}_{\paramRV}}{L(\sourcedata, \param)} \right)}} \nonumber
            \end{align}
            where $\left( \int_{N_{\epsilon} \left( \targetconcept \right)} p(\param) ~ d\param \right)$ and $\left( \int_{\paramvalues \backslash N_{\epsilon} \left( \targetconcept \right)} p(\param) ~ d\param \right)$ are the probability that a value $\param$ is inside and outside the $\epsilon$-neighborhood of $\targetconcept$, respectively.
        \end{assumption}
        \begin{remark}
            \Cref{as:smoothness} holds when $\paramRV$ is a discrete random variable (in which case the $\epsilon$-neighborhood of $\targetconcept$ can be defined as $\{ \targetconcept \}$ and to exclude all other parameter values).
            When $\paramRV$ is a continuous random variable, \Cref{as:smoothness} is essentially a smoothness condition: For likelihoods that are sufficiently smooth around $\targetconcept$, we can expect it to hold for $\epsilon \rightarrow 0$.
            To see this, notice that Jensen's inequality implies that
            \begin{align}
                &\E{\sourcedata \sim P_{\truedataRV}}{\log{\left( \E{\param \sim P_{\paramRV}}{L(\sourcedata, \param)} \right)}} \nonumber \\
                &\geq \E{\sourcedata \sim P_{\truedataRV}}{\left( \int_{N_{\epsilon} \left( \targetconcept \right)} p(\param) ~ d\param \right) \log{\left( \E{\param \sim P^{N_{\epsilon} \left( \targetconcept \right)}_{\paramRV}}{L(\sourcedata, \param)} \right)} + \left( \int_{\paramvalues \backslash N_{\epsilon} \left( \targetconcept \right)} p(\param) ~ d\param \right) \log{\left( \E{\param \sim P^{\paramvalues \backslash N_{\epsilon} \left( \targetconcept \right)}_{\paramRV}}{L(\sourcedata, \param)} \right)}}. \nonumber
            \end{align}
            \Cref{as:smoothness} holds when $\E{\param \sim P^{N_{\epsilon} \left( \targetconcept \right)}_{\paramRV}}{p(\sourcedata \vert \param)} \approx p(\sourcedata \vert \targetconcept)$ and the approximation is tight enough that it does not close the Jensen gap.
        \end{remark}

        Taking $P_{\sourcedataRV \vert \param \in \paramvalues \backslash \neighborhood{\targetconcept}}$ to be the source data distribution conditioned on the event that the shared parameter is not in the $\epsilon$-neighborhood of $\targetconcept$, we obtain
        \begin{align}
            \cIG\left( \targetconcept \right) &= \E{\sourcedata \sim P_{\truedataRV}}{\log{\left( L\left( \sourcedata, \targetconcept \right) \right)} - \log{\left( \E{\param \sim P_{\paramRV}}{L\left( \sourcedata, \param \right)} \right)}} \nonumber \\
            &\leq \mathbb{E}_{\sourcedata \sim P_{\truedataRV}} \left[ \log{\left( L(\sourcedata, \targetconcept) \right)} - \left( \int_{N_{\epsilon} \left( \targetconcept \right)} p(\param) ~ d\param \right) \log{\left( L(\sourcedata, \targetconcept) \right)} - \right. \nonumber \\
            &\left. \phantom{\mathbb{E}_{\sourcedata \sim P_{\sourcedataRV \vert \targetconcept, \truetsparam}}} \left( \int_{\paramvalues \backslash N_{\epsilon} \left( \targetconcept \right)} p(\param) ~ d\param \right) \log{\left( \E{\param \sim P^{\paramvalues \backslash N_{\epsilon} \left( \targetconcept \right)}_{\paramRV}}{L(\sourcedata, \param)} \right)} \right] ~~~~~~~~~~~~~~~~~~~~~~~~~~~~~~~~~~~~~~~~~~ \text{(\Cref{as:smoothness})} \nonumber \\
            &= \E{\sourcedata \sim P_{\truedataRV}}{\left( \int_{\paramvalues \backslash N_{\epsilon} \left( \targetconcept \right)} p(\param) ~ d\param \right) \left( \log{\left( L(\sourcedata, \targetconcept) \right)} - \log{\left( \E{\param \sim P^{\paramvalues \backslash N_{\epsilon} \left( \targetconcept \right)}_{\paramRV}}{L(\sourcedata, \param)} \right)} \right)} \nonumber \\
            &= \left( \int_{\paramvalues \backslash N_{\epsilon} \left( \targetconcept \right)} p(\param) ~ d\param \right) \left( \crossent{P_{\truedataRV}}{P_{\sourcedataRV \vert \param \in \paramvalues \backslash \neighborhood{\targetconcept}}} - \crossent{P_{\truedataRV}}{P_{\sourcedataRV \vert \targetconcept}} \right) \nonumber \\
            &= \left( \int_{\paramvalues \backslash N_{\epsilon} \left( \targetconcept \right)} p(\param) ~ d\param \right) \left( \ent{P_{\truedataRV}} + \kld{P_{\truedataRV}}{P_{\sourcedataRV \vert \param \in \paramvalues \backslash \neighborhood{\targetconcept}}} - \ent{P_{\truedataRV}} - \kld{P_{\truedataRV}}{P_{\sourcedataRV \vert \targetconcept}} \right) \nonumber \\
            &= \left( \int_{\paramvalues \backslash N_{\epsilon} \left( \targetconcept \right)} p(\param) ~ d\param \right) \left( \kld{P_{\truedataRV}}{P_{\sourcedataRV \vert \param \in \paramvalues \backslash \neighborhood{\targetconcept}}} - \kld{P_{\truedataRV}}{P_{\sourcedataRV \vert \targetconcept}} \right)
        \end{align}
        as stated in the theorem for $A = \left( \int_{\paramvalues \backslash N_{\epsilon} \left( \targetconcept \right)} p(\param) ~ d\param \right)$ and $B = \kld{P_{\truedataRV}}{P_{\sourcedataRV \vert \param \in \paramvalues \backslash \neighborhood{\targetconcept}}}$.

    \subsection{Proof of Theorem 4.4}\label{ap:reweighted-ig}
       The r-weighted information gain (\Cref{def:pir-tig}) can be written as:
        \begin{align}
            \rIG\left( \targetconcept \right) &= \E{\sourcedata,\auxInfo \sim P_{\truedataRV,\auxInfoRV}}{\log{\left( \frac{p^{\mathcal{R}}(\targetconcept \vert \sourcedata, \auxInfo)}{p(\targetconcept)} \right)}} \nonumber \\
            &= \E{\sourcedata,\auxInfo \sim P_{\truedataRV,\auxInfoRV}}{\log{\left( \frac{\frac{\E{\tsparam_{n+1} \sim P_{\targettsparamRV \vert \auxInfo}}{\robustL\left( \sourcedata, \targetconcept, \tsparam = \tsparam_{n+1} \right)} ~ p\left( \targetconcept \right)}{\E{\param, \tsparam_{n+1}^{\prime} \sim P_{\paramRV, \targettsparamRV}}{\robustL\left( \sourcedata, \param, \tsparam = \tsparam_{n+1}^{\prime} \right)}}}{p\left( \targetconcept \right)} \right)}} \nonumber \\
            &= \E{\sourcedata,\auxInfo \sim P_{\truedataRV,\auxInfoRV}}{\log{\left( \frac{\E{\tsparam_{n+1} \sim P_{\targettsparamRV \vert \auxInfo}}{\robustL\left( \sourcedata, \targetconcept, \tsparam = \tsparam_{n+1} \right)}}{\E{\param,\tsparam_{n+1}^{\prime} \sim P_{\paramRV,\targettsparamRV}}{\robustL\left( \sourcedata, \param, \tsparam = \tsparam_{n+1}^{\prime} \right)}} \right)}}
        \end{align}

        \begin{remark}
            Notice that $\rIG\left( \targetconcept \right)$ is defined as an expectation over $P_{\auxInfoRV}$ as well as $P_{\truedataRV}$.
            This, and all other quantities in our analysis which include expectations over $\auxInfoRV$, can be interpreted as marginalizing across the learner's subjective uncertainty about the proxy information they will receive.
            We could have defined $\rIG\left( \targetconcept \right)$ as an expectation across a ``true'' distribution of proxy information, with a corresponding interpretation as the extent to which the learner can expect to gain information upon encountering a given distribution generating \textnormal{both} source data and proxy information.
            Although such an extension of the current analysis would in some sense be technically more complete, we opt to simplify our analysis and define the expectation over proxy information with respect to the learner's subjective uncertainty.
            Both the learner using a classic likelihood and the learner using an r-weighted likelihood use the same prior over $\auxInfoRV$ in estimation of $\targettsparam$, and so the incorrectness of the prior over proxy information is less important than the incorrectness of the prior over source task parameters in understanding the relative advantage of r-weighting.
        \end{remark}
    
        The proof of \Cref{prop:ig-reweighted} uses the following lemma:
        \begin{lemma}\label{lem:jensen-gap}
            Define $\mathcal{J}\left( \log{}; P_{\targettsparamRV \vert \auxInfo} \right) \equiv \log{\left( \E{\tsparam_{n+1} \sim P_{\targettsparamRV \vert \auxInfo}}{\robustL\left( \sourcedata, \targetconcept, \tsparam = \tsparam_{n+1} \right)} \right)} - \E{\tsparam_{n+1} \sim P_{\targettsparamRV \vert \auxInfo}}{\log{\left( \robustL\left( \sourcedata, \targetconcept, \tsparam = \tsparam_{n+1} \right) \right)}}$ and $\mathcal{J}\left( \log{}; P_{\targettsparamRV} \right) \equiv \log{\left( \E{\param,\tsparam_{n+1} \sim P_{\paramRV,\targettsparamRV}}{\robustL\left( \sourcedata, \param, \tsparam = \tsparam_{n+1} \right)} \right)} - \E{\tsparam_{n+1} \sim P_{\targettsparamRV}}{\log{\left( \E{\param \sim P_{\paramRV}}{\robustL\left( \sourcedata, \param, \tsparam = \tsparam_{n+1} \right)} \right)}}$.
            Under \Cref{as:jensen-gap} and \Cref{as:convergence} (stated formally in the proof of the lemma), $\E{\sourcedata \sim P_{\truedataRV}}{\mathcal{J}\left( \log{}; P_{\targettsparamRV} \right)} \geq \E{\sourcedata, \auxInfo \sim P_{\truedataRV,\auxInfoRV}}{\mathcal{J}\left( \log{}; P_{\targettsparamRV \vert \auxInfo} \right)}$.
        \end{lemma}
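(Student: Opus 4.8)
The plan is to recognize both $\mathcal{J}\left( \log{}; P_{\targettsparamRV} \right)$ and $\mathcal{J}\left( \log{}; P_{\targettsparamRV \vert \auxInfo} \right)$ as Jensen gaps of the concave map $\log{}$ applied to random variables that, by \Cref{as:jensen-gap}, take values in $[a,b] \subset \mathbb{R}^+$; to sandwich each of the two expected gaps between a multiple of a prior variance and a multiple of a posterior variance via a two-sided refinement of Jensen's inequality; and to reduce the claim to a single comparison of these expected variances, which I would adopt as the formal statement of \Cref{as:convergence}. Concretely, with $g\left( \tsparam_{n+1} \right) \equiv \E{\param \sim P_{\paramRV}}{\robustL\left( \sourcedata, \param, \tsparam = \tsparam_{n+1} \right)}$, and using that $\paramRV$ and $\targettsparamRV$ are independent under the prior (\Cref{as:theta-psi-ind}(c)) to pull the $\paramRV$-expectation inside, $\mathcal{J}\left( \log{}; P_{\targettsparamRV} \right)$ is the Jensen gap of $\log{}$ for the random variable $g\left( \tsparam_{n+1} \right)$ under $\tsparam_{n+1} \sim P_{\targettsparamRV}$, while $\mathcal{J}\left( \log{}; P_{\targettsparamRV \vert \auxInfo} \right)$ is the Jensen gap of $\log{}$ for $\robustL\left( \sourcedata, \targetconcept, \tsparam = \tsparam_{n+1} \right)$ under $\tsparam_{n+1} \sim P_{\targettsparamRV \vert \auxInfo}$. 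Each value of $\robustL$ lies in $[a,b]$, so the convex combination $g\left( \tsparam_{n+1} \right)$ does too, and both random variables are $[a,b]$-valued.

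The first step is the two-sided Jensen-gap bound: for any random variable $X$ supported in $[a,b]$ with $a > 0$, $\tfrac{1}{2b^2}\mathrm{Var}(X) \leq \log{\left( \E{}{X} \right)} - \E{}{\log{X}} \leq \tfrac{1}{2a^2}\mathrm{Var}(X)$. This is a one-line Taylor argument with Lagrange remainder: writing $\mu \equiv \E{}{X}$, we have $\log{X} = \log{\mu} + \mu^{-1}(X - \mu) - \tfrac{1}{2}\xi^{-2}(X - \mu)^2$ for some $\xi = \xi(X)$ between $X$ and $\mu$; taking expectations gives $\E{}{\log{X}} - \log{\mu} = -\tfrac{1}{2}\E{}{\xi^{-2}(X - \mu)^2}$, and $\xi \in [a,b]$ forces $\xi^{-2} \in [b^{-2}, a^{-2}]$, which pinches the right-hand side between $-\tfrac{1}{2a^2}\mathrm{Var}(X)$ and $-\tfrac{1}{2b^2}\mathrm{Var}(X)$; negating yields the claim.

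The second step applies this bound in the two directions that help. On the larger side I lower-bound $\mathcal{J}\left( \log{}; P_{\targettsparamRV} \right) \geq \tfrac{1}{2b^2}\mathrm{Var}_{\tsparam_{n+1} \sim P_{\targettsparamRV}}\!\left( g\left( \tsparam_{n+1} \right) \right)$; on the smaller side I upper-bound $\mathcal{J}\left( \log{}; P_{\targettsparamRV \vert \auxInfo} \right) \leq \tfrac{1}{2a^2}\mathrm{Var}_{\tsparam_{n+1} \sim P_{\targettsparamRV \vert \auxInfo}}\!\left( \robustL\left( \sourcedata, \targetconcept, \tsparam = \tsparam_{n+1} \right) \right)$. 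Taking $\E{\sourcedata \sim P_{\truedataRV}}{\cdot}$ of the former and $\E{\sourcedata, \auxInfo \sim P_{\truedataRV, \auxInfoRV}}{\cdot}$ of the latter, the lemma follows as soon as
$$\tfrac{1}{b^2}\, \E{\sourcedata \sim P_{\truedataRV}}{\mathrm{Var}_{\tsparam_{n+1} \sim P_{\targettsparamRV}}\!\left( g\left( \tsparam_{n+1} \right) \right)} \;\geq\; \tfrac{1}{a^2}\, \E{\sourcedata, \auxInfo \sim P_{\truedataRV, \auxInfoRV}}{\mathrm{Var}_{\tsparam_{n+1} \sim P_{\targettsparamRV \vert \auxInfo}}\!\left( \robustL\left( \sourcedata, \targetconcept, \tsparam = \tsparam_{n+1} \right) \right)} ,$$
and this is exactly the inequality I would take as the formal content of \Cref{as:convergence}: the ``variability'' of $\targettsparamRV \vert \auxInfo$, propagated through the likelihood, is smaller than that of $\targettsparamRV$ by a margin large enough to absorb the constant factor $b^2/a^2$.

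I expect the delicate point to be the formulation of \Cref{as:convergence} itself, not the rest of the argument. The two refinements are used in opposite directions with mismatched constants $1/b^2 \leq 1/a^2$, and the two variances are of different integrands ($g$, an average over $\paramRV$, versus $\robustL(\cdot, \targetconcept, \cdot)$), so the displayed inequality is not automatic. The law of total variance does give, for any fixed integrand, $\E{\auxInfo}{\mathrm{Var}_{P_{\targettsparamRV \vert \auxInfo}}(\cdot)} \leq \mathrm{Var}_{P_{\targettsparamRV}}(\cdot)$ --- conditioning on the proxy never increases variability --- which is the qualitative content we want, but on its own it neither dominates the factor $b^2/a^2$ nor bridges the gap between the two integrands. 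Stating \Cref{as:convergence} so that it is simultaneously interpretable (``the proxy is informative enough'') and exactly strong enough to close the sandwich is thus the only real obstacle; once it is stated in the displayed form, the lemma is immediate.
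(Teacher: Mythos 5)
Your proposal is correct and takes essentially the same route as the paper: the paper also recognizes both quantities as Jensen gaps of $\log$, applies a quantitative refinement of Jensen's inequality (H\"{o}lder's defect formula, whose standard proof is exactly your Taylor-with-Lagrange-remainder argument, using \Cref{as:jensen-gap} to bound the second derivative between $1/b^2$ and $1/a^2$), and then states \Cref{as:convergence} as precisely the resulting comparison of expected variances. The only difference is that the paper keeps the exact data-dependent curvature coefficients $\mu_1(\sourcedata,\auxInfo), \mu_2(\sourcedata) \in [1/b^2, 1/a^2]$ inside the formal statement of \Cref{as:convergence}, whereas you replace them by the worst-case constants $1/a^2$ and $1/b^2$, which yields a slightly more conservative sufficient condition that implies the paper's version.
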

        \begin{proof}[Proof of \Cref{lem:jensen-gap}]
            The lemma leverages a result known as H\"{o}lder's defect \citep{steele_cauchy-schwarz_2004,becker_variance_2012}:
            \begin{theorem}[H\"{o}lder's defect (restated from \citealt{steele_cauchy-schwarz_2004}\footnote{\citet{steele_cauchy-schwarz_2004} states the result in terms of discrete sums; we here modified the statement of the result so it can be interpreted for continuous random variables.})
            ]
                If $f ~ : ~ \left[ a, b \right] \rightarrow \mathbb{R}$ is twice differentiable and if we have the bounds
                $$0 \leq m \leq f^{\prime \prime}(x) \leq M ~ \text{for all} ~ x \in \left[ a, b \right],$$
                then for a distribution $P$ over $\left[ a, b \right]$, there exists a real value $\mu \in \left[ m, M \right]$ for which one has the formula
                $$\underbrace{\E{x \sim P}{f(x)} - f\left( \E{x \sim P}{x} \right)}_{\mathcal{J}\left( -f; P \right)} = \frac{1}{2} \mu \mathrm{Var}_{x \sim P}\left[ x \right]$$
                for $\mathrm{Var}_{x \sim P}\left[ x \right] \equiv \E{x \sim P}{\left( x - \E{x \sim P}{x} \right)^2}$.
            \end{theorem}

            Our goal is to use H\"{o}lder's defect to relate $\mathcal{J}\left( \log{}; P_{\targettsparamRV \vert \auxInfo} \right)$ and $\mathcal{J}\left( \log{}; P_{\targettsparamRV} \right)$ to $\mathrm{Var}_{\tsparam_{n+1} \sim P_{\tsparamRV_{n+1} \vert \auxInfo}} \left[ \robustL\left( \sourcedata, \targetconcept, \tsparam = \tsparam_{n+1} \right) \right]$ and $\mathrm{Var}_{\tsparam_{n+1} \sim P_{\tsparamRV_{n+1}}} \left[ \E{\param \sim P_{\paramRV}}{\robustL\left( \sourcedata, \param, \tsparam = \tsparam_{n+1} \right)} \right]$, respectively.
            We first verify that the conditions required for H\"{o}lder's defect formula to apply are met.
            For both applications of the result, $f$ is the negative of the log function.
            In application to $\mathcal{J}\left( \log{}; P_{\targettsparamRV \vert \auxInfo} \right)$, $f$ takes as input values of $\robustL\left( \sourcedata, \targetconcept, \tsparam = \tsparam_{n+1} \right)$.
            In application to $\mathcal{J}\left( \log{}; P_{\targettsparamRV} \right)$, $f$ takes as input values of $\E{\param \sim P_{\paramRV}}{\robustL\left( \sourcedata, \param, \tsparam = \tsparam_{n+1} \right)}$.
            \begin{itemize}
                \item $f ~ : ~ \left[ a, b \right] \rightarrow \mathbb{R}$: \Cref{as:jensen-gap} ensures that inputs in both cases are bounded from both below and above.
                \item $f$ is twice differentiable: The second derivative of $f$ evaluated at $x$ is $f^{\prime \prime}(x) = \frac{1}{x^{2}}$.
                \item $0 \leq m \leq f^{\prime \prime}(x) \leq M$: \Cref{as:jensen-gap} ensures this for $m = \frac{1}{b^2}$ and $M = \frac{1}{a^2}$.
            \end{itemize}

            H\"{o}lder's defect then implies the following:
            \begin{align}\label{eq:jensen-psi_z}
                \mathcal{J}\left( \log{}; P_{\targettsparamRV \vert \auxInfo} \right) &= \frac{1}{2} \mu_1\left( \sourcedata, \auxInfo \right) \mathrm{Var}_{\tsparam_{n+1} \sim P_{\tsparamRV_{n+1} \vert \auxInfo}} \left[ \robustL\left( \sourcedata, \targetconcept, \tsparam = \tsparam_{n+1} \right) \right]
            \end{align}
            for a scalar $\mu_1$ that depends on $\sourcedata$ and $\auxInfo$, and
            \begin{align}\label{eq:jensen-psi}
                \mathcal{J}\left( \log{}; P_{\targettsparamRV} \right) &= \frac{1}{2} \mu_2\left( \sourcedata \right) \mathrm{Var}_{\tsparam_{n+1} \sim P_{\tsparamRV_{n+1}}} \left[ \E{\param \sim P_{\paramRV}}{\robustL\left( \sourcedata, \param, \tsparam = \tsparam_{n+1} \right)} \right]
            \end{align}
        for a scalar $\mu_2$ that depends on $\sourcedata$.

        We can now formally state \Cref{as:convergence}:
        \begin{assumption}[Sufficiently informative proxy]\label{as:convergence}
            The proxy is sufficiently informative in the sense that the following condition holds on the relative variances of $\targettsparamRV \vert \auxInfo$ and $\targettsparamRV$:
            \begin{align}
                &\E{\sourcedata \sim P_{\truedataRV}}{\mu_2\left( \sourcedata \right) \mathrm{Var}_{\tsparam_{n+1} \sim P_{\tsparamRV_{n+1}}} \left[ \E{\param \sim P_{\paramRV}}{\robustL\left( \sourcedata, \param, \tsparam = \tsparam_{n+1} \right)} \right]} \geq \nonumber \\
                &\E{\sourcedata, \auxInfo \sim P_{\truedataRV, \auxInfoRV}}{\mu_1\left( \sourcedata, \auxInfo \right) \mathrm{Var}_{\tsparam_{n+1} \sim P_{\tsparamRV_{n+1} \vert \auxInfo}} \left[ \robustL\left( \sourcedata, \targetconcept, \tsparam = \tsparam_{n+1} \right) \right]}. \nonumber
            \end{align}
        \end{assumption}

        Direct substitution of the condition in \Cref{as:convergence} into \Cref{eq:jensen-psi_z,eq:jensen-psi} completes the proof.
    \end{proof}

    In addition to \Cref{lem:jensen-gap}, the proof of \Cref{prop:ig-reweighted} uses the following assumption, which is a variant of \Cref{as:smoothness} for the r-weighted case:
    \begin{assumption}[Smoothness in parameter space]\label{as:smoothness-reweighted}
        There exists some $\epsilon > 0$ such that
        \begin{align}
            &\E{\sourcedata, \tsparam_{n+1} \sim P_{\truedataRV, \targettsparamRV}}{\log{\left( \E{\param \sim P_{\paramRV}}{\robustL(\sourcedata, \param, \tsparam = \tsparam_{n+1})} \right)}} \geq \nonumber \\
            &\mathbb{E}_{\sourcedata, \tsparam_{n+1} \sim P_{\truedataRV, \targettsparamRV}} \left[ \left( \int_{N_{\epsilon} \left( \targetconcept \right)} p(\param) ~ d\param \right) \log{\left( \robustL(\sourcedata, \targetconcept, \tsparam = \tsparam_{n+1}) \right)} \right. \nonumber \\
            &\phantom{\mathbb{E}_{\sourcedata, \tsparam_{n+1} \sim P_{\truedataRV, \targettsparamRV}}} + \left. \left( \int_{\paramvalues \backslash N_{\epsilon} \left( \targetconcept \right)} p(\param) ~ d\param \right) \log{\left( \E{\param \sim P^{\paramvalues \backslash N_{\epsilon} \left( \targetconcept \right)}_{\paramRV}}{\robustL(\sourcedata, \param, \tsparam = \tsparam_{n+1})} \right)} \right] \nonumber
        \end{align}
        where $\left( \int_{N_{\epsilon} \left( \targetconcept \right)} p(\param) ~ d\param \right)$ and $\left( \int_{\paramvalues \backslash N_{\epsilon} \left( \targetconcept \right)} p(\param) ~ d\param \right)$ are the probability that a value $\param$ is inside and outside the $\epsilon$-neighborhood of $\targetconcept$, respectively.
    \end{assumption}
    \begin{remark}
        In addition to the smoothness condition on the likelihood imposed by \Cref{as:smoothness}, \Cref{as:smoothness-reweighted} additionally imposes what is essentially a ceiling on the outputs of the relevance function.
        Weights $< 1$ ``flatten'', or smooth out, the likelihood function; weights $> 1$ ``sharpen'' it, and may cause violation of \Cref{as:smoothness-reweighted} even in cases where \Cref{as:smoothness} is met.
        The relevance functions used in our examples (\Cref{sec:experiments}) output weights $\leq 1$.
    \end{remark}

    \allowdisplaybreaks
    We obtain
    \begin{align}
        \rIG\left( \targetconcept \right) = &\E{\sourcedata, \auxInfo \sim P_{\truedataRV, \auxInfoRV}}{\log{\left( \frac{\E{\tsparam_{n+1} \sim P_{\tsparamRV_{n+1} \vert \auxInfo}}{\robustL\left( \sourcedata, \targetconcept, \tsparam = \tsparam_{n+1} \right)}}{\E{\param, \tsparam^{\prime}_{n+1} \sim P_{\paramRV, \tsparamRV_{n+1}}}{\robustL \left( \sourcedata, \param, \tsparam_{n+1}^{\prime} \right)}} \right)}} \nonumber \\
        \text{(\Cref{lem:jensen-gap})} ~~~~~~ \leq &\E{\sourcedata, \auxInfo \sim P_{\truedataRV, \auxInfoRV}}{\E{\tsparam_{n+1} \sim P_{\tsparamRV_{n+1} \vert \auxInfo}}{\log{\left( \robustL\left( \sourcedata, \targetconcept, \tsparam = \tsparam_{n+1} \right) \right)}}} \nonumber \\
        &- \E{\sourcedata \sim P_{\truedataRV}}{\E{\tsparam_{n+1} \sim P_{\tsparamRV_{n+1}}}{\log{\left( \E{\param \sim P_{\paramRV}}{\robustL\left( \sourcedata, \param, \tsparam = \tsparam_{n+1} \right)} \right)}}} \nonumber \\
        = &\E{\sourcedata \sim P_{\truedataRV}}{\E{\tsparam_{n+1} \sim P_{\tsparamRV_{n+1}}}{\log{\left( \robustL\left( \sourcedata, \targetconcept, \tsparam = \tsparam_{n+1} \right) \right)}}} \nonumber \\
        &- \E{\sourcedata \sim P_{\truedataRV}}{\E{\tsparam_{n+1} \sim P_{\tsparamRV_{n+1}}}{\log{\left( \E{\param \sim P_{\paramRV}}{\robustL\left( \sourcedata, \param, \tsparam = \tsparam_{n+1} \right)} \right)}}} \nonumber \\
        = &\E{\sourcedata, \tsparam_{n+1} \sim P_{\truedataRV, \tsparamRV_{n+1}}}{\log{\left( \robustL\left( \sourcedata, \targetconcept, \tsparam = \tsparam_{n+1} \right) \right)} - \log{\left( \E{\param \sim P_{\paramRV}}{\robustL\left( \sourcedata, \param, \tsparam = \tsparam_{n+1} \right)} \right)}} \nonumber \\
        \text{(\Cref{as:smoothness-reweighted})} \leq &\mathbb{E}_{\sourcedata, \tsparam_{n+1} \sim P_{\truedataRV, \tsparamRV_{n+1}}}\left[\log{\left( \robustL\left( \sourcedata, \targetconcept, \tsparam = \tsparam_{n+1} \right) \right)} - \left( \int_{N_{\epsilon} \left( \targetconcept \right)} p(\param) ~ d\param \right) \log{\left( \robustL(\sourcedata, \targetconcept, \tsparam = \tsparam_{n+1}) \right)} - \right. \nonumber \\
        &\phantom{\mathbb{E}_{\sourcedata,\tsparam_{n+1} \sim P_{\truedataRV,\tsparamRV_{n+1}}}\left[\right. ~~~~~} \left. \left( \int_{\paramvalues \backslash N_{\epsilon} \left( \targetconcept \right)} p(\param) ~ d\param \right) \log{\left( \E{\param \sim P^{\paramvalues \backslash N_{\epsilon} \left( \targetconcept \right)}_{\paramRV}}{\robustL(\sourcedata, \param, \tsparam = \tsparam_{n+1})} \right)} \right] \nonumber \\
        = &\mathbb{E}_{\tsparam_{n+1} \sim P_{\tsparamRV_{n+1}}} \left[ \mathbb{E}_{\sourcedata \sim P_{\truedataRV}} \left[ \left( \int_{\paramvalues \backslash N_{\epsilon} \left( \targetconcept \right)} p(\param) ~ d\param \right) \left( \log{\left( \robustL\left( \sourcedata, \targetconcept, \tsparam = \tsparam_{n+1} \right) \right)} \right. \right. \right. \nonumber \\
        &\left. \left. \left. \phantom{\mathbb{E}_{\tsparam_{n+1} \sim P_{\tsparamRV_{n+1}}} \left[ \mathbb{E}_{\sourcedata \sim P_{\truedataRV}} \left[ \left( \int_{\paramvalues \backslash N_{\epsilon} \left( \targetconcept \right)} p(\param) ~ d\param \right) \right. \right.} - \log{\left( \E{\param \sim P^{\paramvalues \backslash N_{\epsilon} \left( \targetconcept \right)}_{\paramRV}}{\robustL(\sourcedata, \param, \tsparam = \tsparam_{n+1})} \right)} \right) \right] \right] \nonumber \\
        = &\left( \int_{\paramvalues \backslash N_{\epsilon} \left( \targetconcept \right)} p(\param) ~ d\param \right) \mathbb{E}_{\tsparam_{n+1} \sim P_{\tsparamRV_{n+1}}} \left[ \crossent{P_{\truedataRV}}{P_{\sourcedataRV^{\mathcal{R}(\tsparam_{n+1})} \vert \param \in \paramvalues \backslash \neighborhood{\targetconcept}, \tsparam = \tsparam_{n+1}}} \right. \nonumber \\
        &\phantom{\left( \int_{\paramvalues \backslash N_{\epsilon} \left( \targetconcept \right)} p(\param) ~ d\param \right) \mathbb{E}_{\tsparam_{n+1} \sim P_{\tsparamRV_{n+1}}}} - \left. \crossent{P_{\truedataRV}}{P_{\sourcedataRV^{\mathcal{R}(\tsparam_{n+1})} \vert \targetconcept, \tsparam = \tsparam_{n+1}}} \right] \nonumber \\
        = &\left( \int_{\paramvalues \backslash N_{\epsilon} \left( \targetconcept \right)} p(\param) ~ d\param \right) \mathbb{E}_{\tsparam_{n+1} \sim P_{\tsparamRV_{n+1}}} \left[ \ent{P_{\truedataRV}} + \kld{P_{\truedataRV}}{P_{\sourcedataRV^{\mathcal{R}(\tsparam_{n+1})} \vert \param \in \paramvalues \backslash \neighborhood{\targetconcept}, \tsparam = \tsparam_{n+1}}} \right. \nonumber \\
        &\phantom{\left( \int_{\paramvalues \backslash N_{\epsilon} \left( \targetconcept \right)} p(\param) ~ d\param \right) \mathbb{E}_{\tsparam_{n+1} \sim P_{\tsparamRV_{n+1}}} \left[ \right.} \left.- \ent{P_{\truedataRV}} - \kld{P_{\truedataRV}}{P_{\sourcedataRV^{\mathcal{R}(\tsparam_{n+1})} \vert \targetconcept, \tsparam = \tsparam_{n+1}}} \right] \nonumber \\
        = &\left( \int_{\paramvalues \backslash N_{\epsilon} \left( \targetconcept \right)} p(\param) ~ d\param \right) \mathbb{E}_{\tsparam_{n+1} \sim P_{\tsparamRV_{n+1}}} \left[ \kld{P_{\truedataRV}}{P_{\sourcedataRV^{\mathcal{R}(\tsparam_{n+1})} \vert \param \in \paramvalues \backslash \neighborhood{\targetconcept}, \tsparam = \tsparam_{n+1}}} \right. \nonumber \\
        &\phantom{\left( \int_{\paramvalues \backslash N_{\epsilon} \left( \targetconcept \right)} p(\param) ~ d\param \right) \mathbb{E}_{\tsparam_{n+1} \sim P_{\tsparamRV_{n+1}}}} \left. - \kld{P_{\truedataRV}}{P_{\sourcedataRV^{\mathcal{R}(\tsparam_{n+1})} \vert \targetconcept, \tsparam = \tsparam_{n+1}}} \right]
    \end{align}
    as stated in the theorem for $A = \left( \int_{\paramvalues \backslash N_{\epsilon} \left( \targetconcept \right)} p(\param) ~ d\param \right)$ and $C = \E{\tsparam_{n+1} \sim P_{\tsparamRV_{n+1}}}{\kld{P_{\truedataRV}}{P_{\sourcedataRV^{\mathcal{R}(\tsparam_{n+1})} \vert \param \in \paramvalues \backslash \neighborhood{\targetconcept}, \tsparam = \tsparam_{n+1}}}}$.

    \subsection{Proof of Proposition 5.5}\label{ap:reweighted-rho}
        The proof depends on the following definition:
        \begin{definition}[Fidelity of the relevance function $\rho^{\mathcal{R}}$]\label{def:rho}
            $\rho^{\mathcal{R}}$ is a measure of the fidelity of the relevance function.
            More specifically, it is:
            \begin{align}
                \rho^{\mathcal{R}} \equiv \mathbb{E}_{\sourcedata, \tsparam_{n+1} \sim P_{\truedataRV,\tsparamRV_{n+1}}}
                &\left[ \frac{1}{n} \sum_{i=1}^n \left( \csim{i}{\tsparam_{n+1}} - \frac{1}{n} \sum_{i=1}^n \csim{i}{\tsparam_{n+1}} \right) \right. \nonumber \\
                &\phantom{\frac{1}{n} \sum_{i=1}^n} \left.
                \left( \log{\left( p\left( \sourcedata_i \vert \targetconcept, \tsparam_i = \tsparam_{n+1} \right) \right)} -  \frac{1}{n} \sum_{i=1}^n \log{\left( p\left( \sourcedata_i \vert \targetconcept, \tsparam_i = \tsparam_{n+1} \right) \right)} \right)
                \right], \nonumber
            \end{align}
            i.e., is the covariance of $\csim{i}{\tsparam_{n+1}}$ and $\log{\left( p\left( \sourcedata_i \vert \targetconcept, \tsparam_i = \tsparam_{n+1} \right) \right)}$ with respect to a uniform distribution over the source data, in expectation over $P_{\truedataRV,\tsparamRV_{n+1}}$.
        \end{definition}

        $\rDelta$ can be rewritten as
        \begin{align}\label{eq:cor-reweighting}
            \rDelta &= \E{\tsparam_{n+1} \sim P_{\tsparamRV_{n+1}}}{\E{\sourcedata \sim P_{\truedataRV}}{\log{\left( \frac{L\left( \sourcedata, \targetconcept, \truetsparam \right)}{\robustL\left( \sourcedata, \targetconcept, \tsparam = \tsparam_{n+1} \right)} \right)}}} \nonumber \\
            &= - \E{\sourcedata, \tsparam_{n+1} \sim P_{\truedataRV,\tsparamRV_{n+1}}}{\log{\left( \robustL\left( \sourcedata, \targetconcept, \tsparam = \tsparam_{n+1} \right) \right)}} - \ent{P_{\truedataRV}} \nonumber \\
            &= - \E{\sourcedata,\tsparam_{n+1} \sim P_{\truedataRV,\tsparamRV_{n+1}}}{\sum_{i=1}^n \csim{i}{\tsparam_{n+1}} \log{\left( p\left( \sourcedata_i \vert \targetconcept, \tsparam_i = \tsparam_{n+1} \right) \right)}} - \ent{P_{\truedataRV}} \nonumber \\
            &= \E{\sourcedata,\tsparam_{n+1} \sim P_{\truedataRV,\tsparamRV_{n+1}}}{\left( \sum_{i=1}^n \csim{i}{\tsparam_{n+1}} \right) \left( - \sum_{i=1}^n \log{\left( p\left( \sourcedata_i \vert \targetconcept, \tsparam_i = \tsparam_{n+1} \right) \right)} \right)} - n \rho^{\mathcal{R}} - \ent{P_{\truedataRV}} \nonumber \\
            &= \E{\sourcedata,\tsparam_{n+1} \sim P_{\truedataRV,\tsparamRV_{n+1}}}{\left( \sum_{i=1}^n \csim{i}{\tsparam_{n+1}} \right) \left( - \log{\left( p\left( \sourcedata \vert \targetconcept, \tsparam = \tsparam_{n+1} \right) \right)} \right)} - n \rho^{\mathcal{R}} - \ent{P_{\truedataRV}} ~~~~~~~~~ \text{(\Cref{as:theta-psi-ind})} \nonumber
        \end{align}
        as stated in the proposition for $D = - \ent{P_{\truedataRV}}$.

        \begin{remark}
            As discussed in \Cref{sec:step2}, in practice the learner can often specify a sufficiently high-fidelity relevance function even in the absence of knowledge of $\targetconcept$, i.e., a relevance function for which $\rho^{\mathcal{R}}$ is sufficiently large.
            An example relevance function is given in \Cref{eq:iterR}.
            However, this relevance function is not guaranteed to positively correlate with $p\left( \sourcedata_i \vert \targetconcept, \tsparam_i = \tsparam_{n+1} \right)$.
            If the learner is particularly unlucky, this relevance function could have a negative corresponding value of $\rho^{\mathcal{R}}$, i.e., increase the relevance of source data points \textnormal{least} likely under a particular pseudo-intervention.
            This may occur if $\param$ and $\tsparam$ interact such that the direction of the gradient of predictions with respect to $\tsparam$ depends on $\param$.
            For example, consider a case in which for all except very few values of $\param$, outcomes increase as a function of $\tsparam$.
            In the context of our motivating example of treatment effect estimation, this might correspond to a situation where hospital quality generally increases the relative effectiveness of a treatment, unless the treatment effect is very extreme (in which case hospital quality has a larger impact on the effect of a placebo).
            If the true treatment effect is in fact very extreme, the relevance function shown in \Cref{eq:iterR} would likely negatively correlate with $p\left( \sourcedata_i \vert \targetconcept, \tsparam_i = \tsparam_{n+1} \right)$.
        \end{remark}

\section{DETAILS OF EXAMPLES}\label{ap:experiments}
    We here report the details of the examples described in \Cref{sec:experiments}.
    \Cref{ap:linreg} gives details of the linear regression example, \Cref{ap:smoking} gives details of the example predicting smoking behavior, and \Cref{ap:gp} gives details of the GP regression example and results showing the relative performance of \algAbbrev{} as a function of the values of each of several simulation parameters.

    \subsection{Linear regression}\label{ap:linreg}
        All simulations were run using only a CPU.
        In all simulations, the value of the shared parameter $\targetconcept = -1$.
        The prior $P_{\paramRV,\tsparamRV_i} = \mathcal{N}\left( [ 0, 0 ]^{\top}, \mathrm{diag}\left( [ 1, 1 ] \right) \right)$ for all $i \in 1:n+1$.
        
        To generate source data, we first specified a particular level of multicollinearity $\rho$.
        A higher degree of multicollinearity makes $\targetconcept$ and $\targettsparam$ harder to separately identify, so we interpret this as a higher risk of negative transfer.
        We varied $\rho$ among 0 (no multicollinearity), 1 (mild multicollinearity), and 2 (extreme multicollinearity).
        
        For a given value $\rho$ we sampled values $\sourcex^{\prime} \sim \mathcal{N}\left( \rho, .25 \right)$, and then constructed values $\sourcex_{(\cdot,1)} \sim \mathcal{N}\left( \sourcex^{\prime}, .25 \right)$ and values $\sourcex_{(\cdot,2)} \sim \mathcal{N}\left( -\frac{\rho^2}{\sourcex^{\prime}}, .25 \right)$.
        We created 100 such data points.
        Twenty-five of these data points were used to create proxy information (i.e., used to generate values $\auxInfo_i$ as described below), and 75 were used as outcome information on the basis of which to estimate the shared parameter.

        In the simulations shown in \Cref{fig:betabinom}, all proxy values are uncontaminated.
        In the simulations shown in \Cref{fig:linear-contamination}, $\rho = 2$ always, i.e., all simulations are run in the presence of extreme multicollinearity.

        \paragraph{Relevance function.}
            We first computed the relevances as $\csim{i}{\tsparam_{n+1}} \propto p\left( \sourcedata_i \vert \tsparam_i = \tsparam_{n+1} \right) = \E{\param \sim P_{\paramRV}}{p\left( \sourcedata_i \vert \param, \tsparam_i = \tsparam_{n+1} \right)}$ where the constant of proportionality was the probability a distribution with the same variance would assign to its mode.
            Using the calculated relevances, we computed $P^{\mathcal{R}}_{\paramRV, \targettsparamRV \vert \sourcedata, \auxInfo}$.
            We then defined $\widehat{P}^{\mathcal{R}}_{\paramRV}$ as a Gaussian approximation to samples from the r-weighted posterior $P^{\mathcal{R}}_{\paramRV \vert \sourcedata, \auxInfo}$, recomputed each $\csim{i}{\tsparam_{n+1}} \propto \E{\param \sim \widehat{P}^{\mathcal{R}}_{\paramRV}}{p\left( \sourcedata_i \vert \param, \tsparam_i = \tsparam_{n+1} \right)}$, and recomputed the r-weighted posterior.
            In each simulation, we repeated this three times before ultimately defining the relevance function as an expectation across the distribution $\widehat{P}^{\mathcal{R}}_{\paramRV}$ obtained at the final iteration.

        \paragraph{Proxy information.}
            When $q\%$ of proxy values are contaminated, $1 - q\%$ of proxy values are generated as $\auxInfo \sim \mathrm{Binomial}\left( 7, \tilde{p}(\sourcedata^{\prime} \vert \tsparam^{\prime} = \targettsparam) \right)$ where $\sourcedata^{\prime}$ are observations used to prompt the synthetic expert for feedback, $\tsparam^{\prime}$ are the corresponding task parameters, and $\tilde{p}$ indicates that the probability has been normalized to not exceed 1.
            The remaining $q\%$ of proxy values are generated as $\auxInfo \sim \mathrm{Binomial}\left( 7, 1 - \tilde{p}(\sourcedata^{\prime} \vert \tsparam^{\prime} = \targettsparam) \right)$.

    \subsection{Predicting smoking behavior}\label{ap:smoking}
        All computations were run using only a CPU.
        The prior for all effects in both the classic and r-weighted models was $\mathcal{N}\left( 0, 3 \right)$.

        The classic Bayesian learner estimated the fixed effects model
        $$\sourcey_i \vert \sourcex_i, \param, \tsparam \sim \mathrm{Binomial}\left( \mathrm{sigmoid}\left( \param \sourcex_{i,(1:4)}^{\top} + \tsparam \sourcex_{i,(5:28)}^{\top} \right), N_i \right)$$
        where $\sourcex_{i,(1:4)}$ are indicators of the treatment received, $\sourcex_{i,(5:28)}$ are study indicators, and $N_i$ is the number of patients who received the indicated treatment in the indicated study.
        The classic learner's estimate of the study indicator for the target task conditioned on the proxy information, generated as described in the main text, and their estimate of $\left( \param, \tsparam \right)$ used standard Bayesian updating to condition on the source data.
        
        The r-weighted Bayesian learner estimated the model
        $$\sourcey_i \vert \sourcex_i, \param, \tsparam_{n+1} \sim \mathrm{Binomial}\left( \mathrm{sigmoid}\left( \param \sourcex_{i,(1:4)}^{\top} + \tsparam_{n+1} \right), N_i \right)$$
        The r-weighted learner's estimate of $\left( \param, \tsparam_{n+1} \right)$ used the following proxy-informed r-weighted likelihood of the 23 source data points $p^{\mathcal{R}}\left( \sourcedata, \auxInfo \vert \param, \tsparam_{n+1} \right)$:
        \begin{align}
            p^{\mathcal{R}}\left( \sourcedata, \auxInfo \vert \param, \tsparam_{n+1} \right) &= \robustL\left( \sourcedata, \param, \tsparam = \tsparam_{n+1} \right) ~ p\left( \auxInfo \vert \tsparam_{n+1} \right) \nonumber \\
            &= p\left( \auxInfo \vert \tsparam_{n+1} \right) \prod_{i=1}^{n} p\left( \sourcey_i \vert \sourcex_i, \param, \tsparam_i = \tsparam_{n+1} \right)^{\csim{i}{\tsparam_{n+1}}}. \nonumber
        \end{align}

        \paragraph{Proxy information.}
            To simulate proxy information, we sampled $\auxInfo \sim \mathcal{N}\left( \targettsparam, \sigma \right) + \mathbbm{1}_{noisy} \epsilon$ where $\mathbbm{1}_{noisy}$ indicates whether proxy contamination is present and $\epsilon \sim \mathcal{N}\left( 0, 3 \right)$ is the bias added to contaminate the proxy information.
            Since we do not know the true value $\targettsparam$, we approximated $\targettsparam$ by the mean of the corresponding fixed effect distribution estimated in a model that incorporated data from all 24 studies.
            
            When proxy information is \textit{weakly informative}, $\sigma = 3$ and $\mathbbm{1}_{noisy} = 0$.
            When proxy information is \textit{highly informative}, $\sigma = .1$ and $\mathbbm{1}_{noisy} = 0$.
            When proxy information is \textit{misleading}, $\sigma = 3$ and $\mathbbm{1}_{noisy} = 1$.
            The value of $\mathbbm{1}_{noisy}$ is unknown to the learner, who always models the proxy information as completely uncontaminated (i.e., as if $\mathbbm{1}_{noisy} = 0$).

    \subsection{Gaussian Process regression}\label{ap:gp}
        \begin{figure}[t!]
            \begin{subfigure}{.32\linewidth}
                \includegraphics[width=\linewidth]{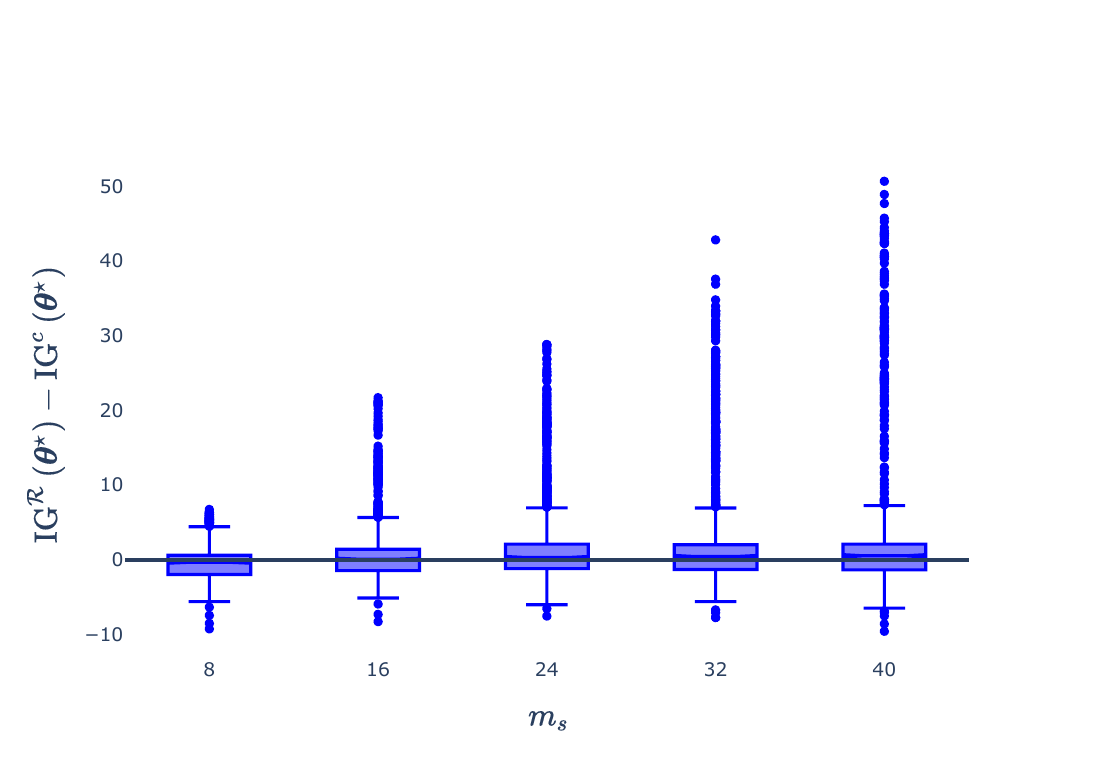}
                \caption{Amount of source data.
                    The advantage is more pronounced when more source data and proxy information is available.
                    See interpretation in the text.
                }
                \label{fig:gp-data-proxy}
            \end{subfigure}\hfill\begin{subfigure}{.32\linewidth}
                \includegraphics[width=\linewidth]{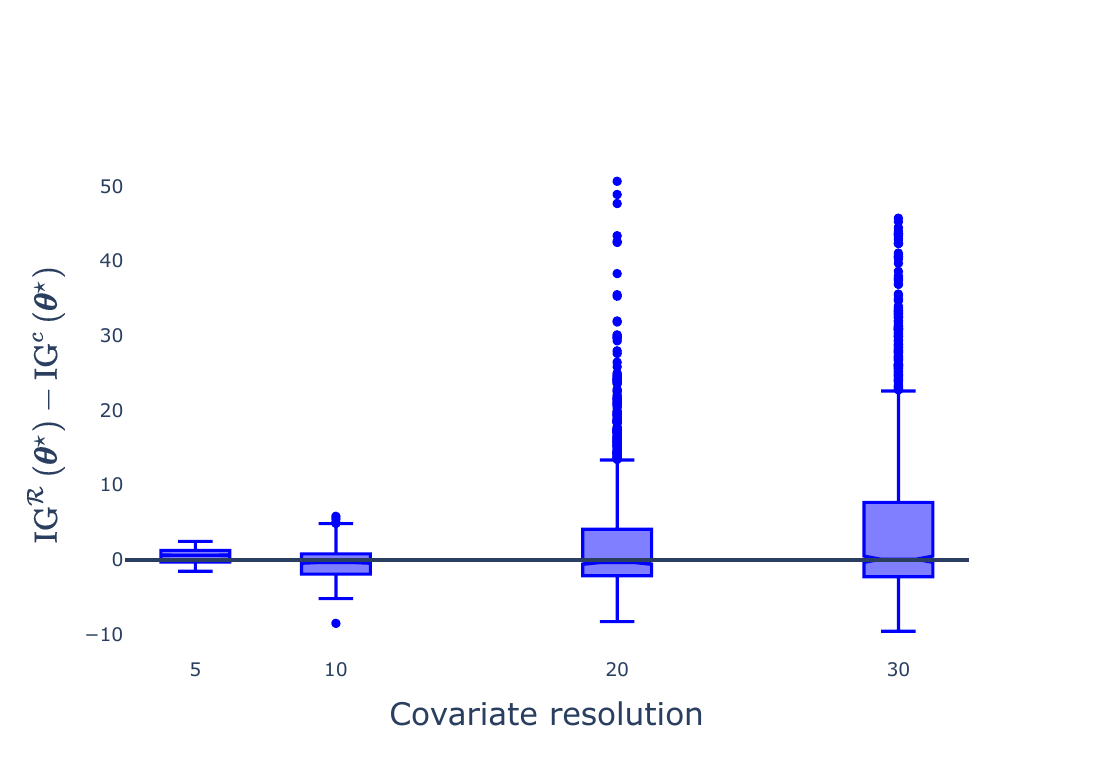}
                \caption{Covariate resolution.
                    The advantage is more pronounced for higher covariate resolutions.
                    See interpretation in the text. \\ ~
                }
                \label{fig:gp-res}
            \end{subfigure}\hfill\begin{subfigure}{.32\linewidth}
                \includegraphics[width=\linewidth]{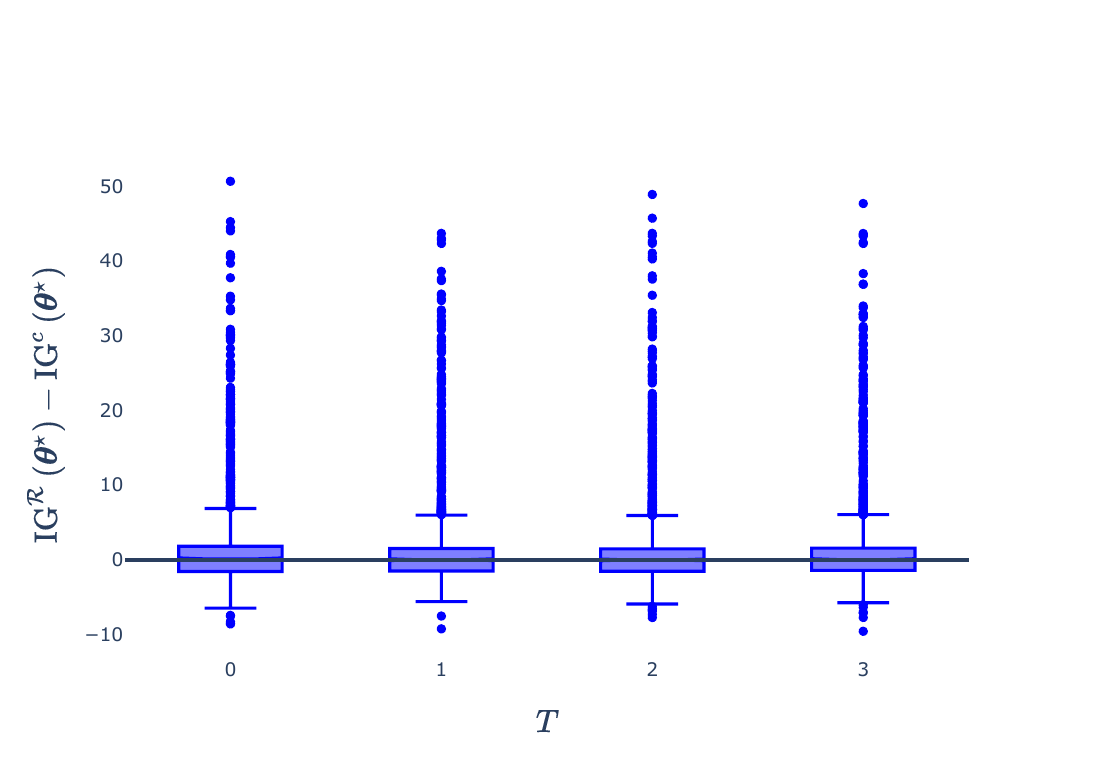}
                \caption{Number of iterations for refinement of the relevance function $T$. \\ ~ \\ ~}
                \label{fig:gp-T}
            \end{subfigure}
            \begin{subfigure}{.32\linewidth}
                \includegraphics[width=\linewidth]{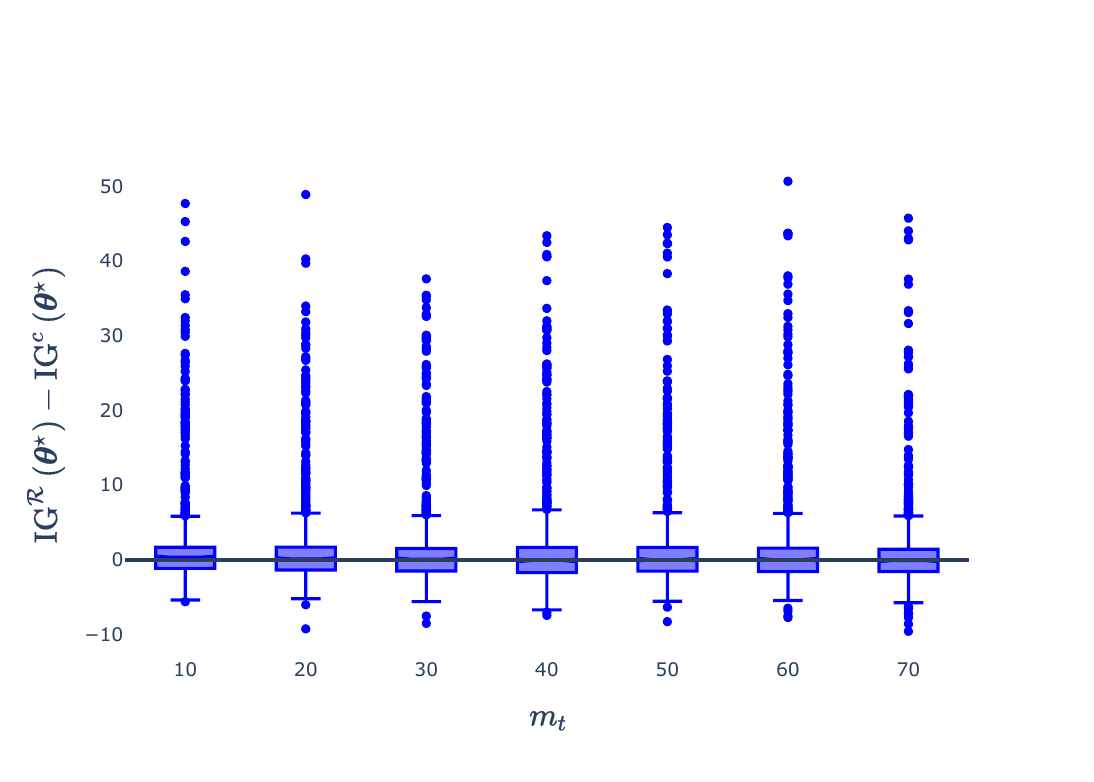}
                \caption{Amount of target information. \\ ~ \\ ~}
                \label{fig:gp-source-dist}
            \end{subfigure}\hfill\begin{subfigure}{.32\linewidth}
                \includegraphics[width=\linewidth]{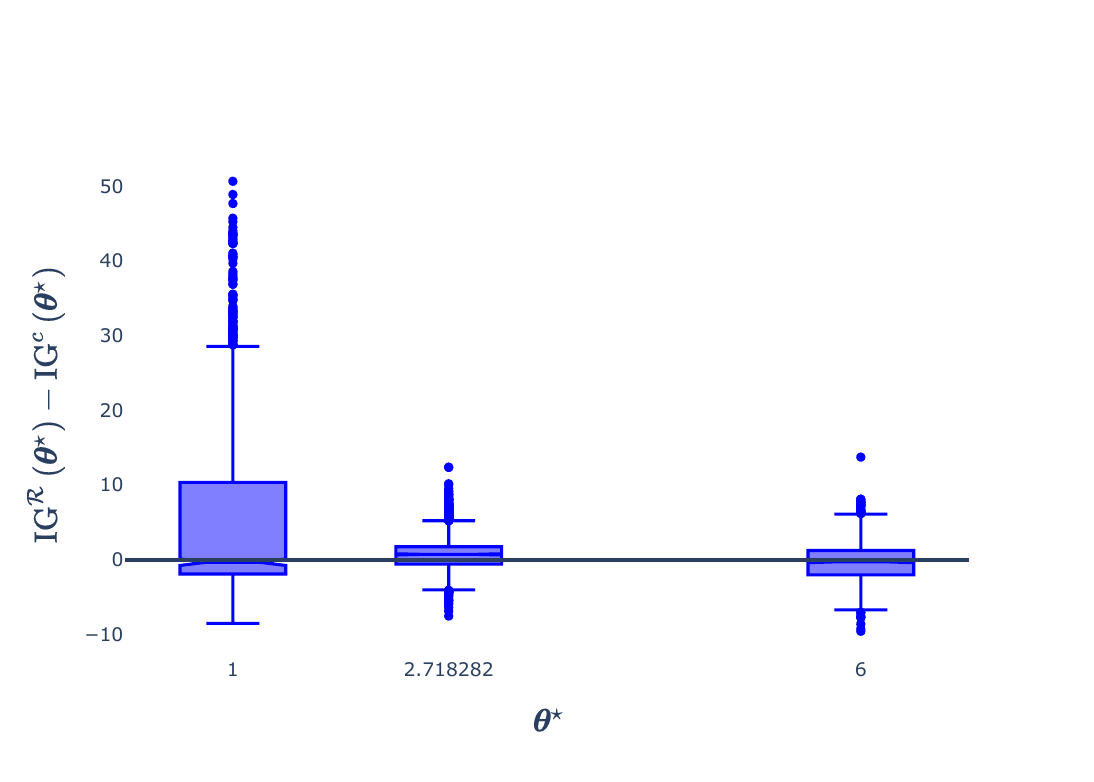}
                \caption{Value of $\targetconcept$.
                    The advantage is more pronounced for lower values of $\targetconcept$.
                    See interpretation in the text.
                }
                \label{fig:gp-target}
            \end{subfigure}\hfill\begin{subfigure}{.32\linewidth}
                \includegraphics[width=\linewidth]{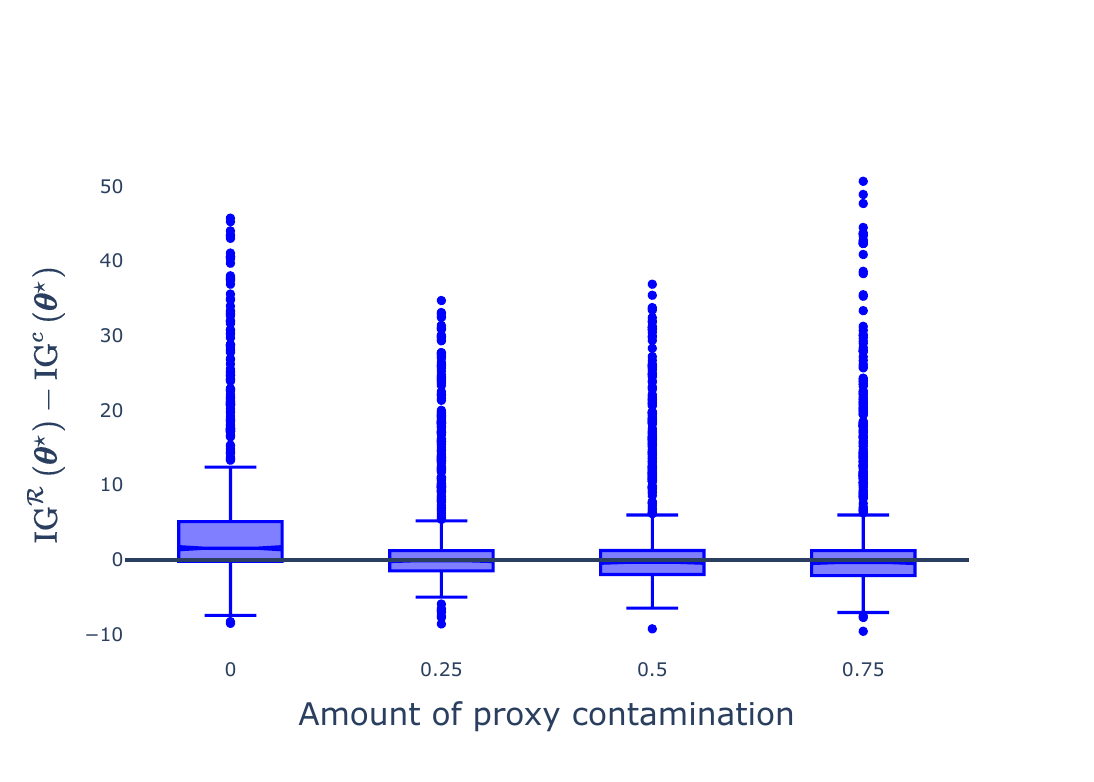}
                \caption{Amount of proxy contamination.
                See interpretation in the text of \Cref{sec:gp}. \\ ~ }
                \label{fig:gp-proxy-noise}
            \end{subfigure}
            \caption{Advantage of learning with an r-weighted likelihood in the GP regression setting as a function of the simulation parameter indicated in the subfigure caption.
                Each box in the plot shows the interquartile region (boxes) and outliers (points), across all values of all other simulation parameters, of the mean of $\rIG\left( \targetconcept \right) - \cIG\left( \targetconcept \right)$ across 50 simulations.
            }
            \label{fig:gp-supp}
        \end{figure}
        Each simulation was run on a single Nvidia A100 GPU.\footnote{The set of simulations run under 36 sets of simulation parameters (.5\% of all sets of simulation parameters) did not complete successfully.
            For an additional 22 sets of simulation parameters (.3\% of the total number of all sets of simulation parameters), all simulations encountered runtime errors.
        }
        The priors were $P_{\paramRV} = \mathrm{Lognormal}\left( 1, 1 \right)$ and $P_{\tsparamRV_i} = \mathrm{Gamma}\left( 3, .8 \right)$ for all $i \in 1:n+1$.
        
        For each simulation, we generated 80 trajectories drawn from a GP of the form given in the main text.
        From these 80 trajectories, trajectories $1:m_t$ were generated from the target task, where the \textit{amount of target information} $m_t$ was a variable simulation parameter (see below).
        Trajectories $(m_t+1):80$ were generated under a task parameter sampled at random from the learner's prior.
        Since these trajectories comprise most of the source data (see discussion of the effect of $m_t$ below), the learner's prior is in most cases relatively well-specified.
        In this sense, the results in \Cref{fig:gp-supp} are a somewhat conservative test of \algAbbrev{}.
        
        Trajectories $1:m_s$ were then used to create synthetic proxy information, while trajectories $(80-m_s):80$ were used for estimation of the target parameter (i.e., as source data), where the \textit{amount of source data} $m_s$ was a variable simulation parameter (see below).
        Proxy information was generated in the same way as for the linear regression example (see \Cref{ap:linreg}).
        
        The relevance function was computed using the same method described in \Cref{ap:linreg}, with the exception that the number of iterations $T$ used for refinement of the relevance function was a variable simulation parameter.

        After observing that results were affected by the value of some simulation parameters, we varied these parameters across simulations.
        We varied the following simulation parameters:
        \begin{itemize}
            \item \textbf{Amount of source data $m_s$:}
                This parameter, which took values in $\{ 8, 16, 24, 32, 40 \}$, controlled the number of trajectories in the source data.
                A distinct set of the same number of trajectories was used to create synthetic proxy information.
                Notice that the number of trajectories in the source data always equals the number of trajectories used to create proxy information, i.e., when more source data is available more proxy information is also available.
            
            \item \textbf{Covariate resolution:}
                Trajectories were evaluated on a grid of evenly-spaced values $\sourcex$ ranging from 0 to 1.
                This parameter, which took values in $\{ 5, 10, 20, 30 \}$, controlled the resolution and size of that grid.

            \item \textbf{Number of iterations for refinement of the relevance function $T$:}
                This parameter, which took values in $\{ 0, 1, 2, 3 \}$, controlled the number of iterations used for refinement of the relevance function.

            \item \textbf{Amount of target information $m_t$:}
                This parameter, which took values in $\{ 10, 20, 30, 40, 50, 60, 70 \}$, controlled the number of trajectories generated by the target task.
                Notice that observations from the target task are almost exclusively used to create synthetic proxy information (the exception is when $m_t > 80-m_s$, in which case the source data contains $m_t + m_s - 80$ trajectories from the target task).
                This reflects that the learner uses proxy information \textit{instead of} direct observations from the target task (i.e., instead of fine-tuning in the target task).
                The amount of target information $m_t$ to a certain extent admits a parallel interpretation as the number of trajectories a learner with both \textit{knowledge of the data sources} and \textit{the ability to fine-tune} (neither of which are available to the learners in our setting) would have to adapt to the target task, i.e., as the cost of operating in the setting of unknown data sources.

            \item \textbf{Value of $\targetconcept$:}
                We set $\targetconcept$ to either 1 (left tail of $P_{\paramRV}$), $e$ (mode of $P_{\paramRV}$), or 6 (right tail of $P_{\paramRV}$).

            \item \textbf{Amount of proxy contamination:}
                We generated and contaminated synthetic proxy information in the same way as described for the linear regression example in \Cref{sec:treatment-effect}.
                This parameter, which took values in $\{ 0, .25, .5, .75 \}$, controlled the fraction of proxy values which were contaminated.
        \end{itemize}

        \Cref{fig:gp-supp} shows how the relative performance of \algAbbrev{} depends on the value of each of the simulation parameters listed above.

        \Cref{fig:gp-data-proxy} shows that \algAbbrev{}'s advantage is more pronounced when more source data and proxy information are available.
        In other words, when the source data is sparse, the classic learner performs on par with the r-weighted learner.
        We speculate that this result reflects that in cases of source data sparsity both methods gain equally little information about $\targetconcept$.
        \Cref{fig:gp-res} shows a similar effect of the informativeness of the source data: \algAbbrev{}'s advantage is more pronounced for higher covariate resolutions.
        When the covariate resolution is low, the covariates are relatively far apart and so all observations will be relatively uncorrelated regardless of the value of the shared and task parameters.
        In these cases, observations provide little information about the smoothness of the underlying function.
        Like in cases of source data sparsity, we speculate that this result reflects that in cases of disparate observations both methods gain equally little information about $\targetconcept$.

        \Cref{fig:gp-target} shows that \algAbbrev{}'s advantage is more pronounced for smaller values of $\targetconcept$.
        This may be because of the effect of the value of $\targetconcept$ on the threat of negative transfer: 
        Values of $\tsparam^{\star}_i$ tend to be large (the distribution from which source task parameters are drawn is right-skewed), and the learner partially attributes the effect of a larger bandwidth in the task-specific component of the kernel to the shared component of the kernel.
        When the bandwidth of the shared component of the kernel is small, the result is negative transfer.
        In this sense, \Cref{fig:gp-target} corroborates the result shown in \Cref{fig:betabinom} that r-weighting is especially effective in the presence of the threat of negative transfer.

\bibliography{bibliography}
\end{document}